\newtheorem{theorem}{Theorem}
\newtheorem{proposition}{Proposition}
\newtheorem{corollary}{Corollary}
\newtheorem{remark}{Remark}
\newcommand{\var}{\rm Var}
\DeclareMathOperator*{\esssup}{ess\,sup}
\DeclareMathOperator\supp{supp}
\title{Cumulant GAN}
\author{
Yannis Pantazis \\
Institute of Applied and Computational Mathematics\\
Foundation for Research \& Technology - Hellas, Greece \\
  \texttt{pantazis@iacm.forth.gr} \\
   \And
 Dipjyoti Paul \\
 Computer Science Department\\
 University of Crete, Greece\\
  \texttt{dipjyotipaul@csd.uoc.gr} \\
   \And
 Michail Fasoulakis \\
 Institute of Computer Science\\
 Foundation for Research \& Technology - Hellas, Greece \\
  \texttt{mfasoul@ics.forth.gr} \\
    \And
 Yannis Stylianou \\
 Computer Science Department\\
 University of Crete, Greece \\
  \texttt{yannis@csd.uoc.gr} \\
   \And
 Markos Katsoulakis \\
 Department of Mathematics and Statistics\\
 University of Massachusetts at Amherst, USA \\
  \texttt{markos@math.umass.edu} \\
}
\date{}
\begin{document}
\maketitle

\begin{abstract}
In this paper, we propose a novel loss function for training Generative Adversarial Networks (GANs) aiming towards deeper theoretical understanding as well as improved stability and performance for the underlying optimization problem. The new loss function is based on cumulant generating functions giving rise to \emph{Cumulant GAN}. Relying on a recently-derived variational formula, we show that the corresponding  optimization problem is equivalent to R{\'e}nyi divergence minimization, thus offering a (partially) unified perspective of GAN losses: the R{\'e}nyi family encompasses Kullback-Leibler divergence (KLD), reverse KLD, Hellinger distance and $\chi^2$-divergence. Wasserstein GAN is also a member of cumulant GAN. In terms of stability, we rigorously prove the linear convergence of cumulant GAN to the Nash equilibrium for a linear discriminator, Gaussian distributions and the standard gradient descent ascent algorithm. Finally, we experimentally demonstrate that image generation is more robust relative to Wasserstein GAN and it is substantially improved in terms of both inception score and Fr\'echet inception distance  when both weaker and stronger discriminators are considered.
\end{abstract}

\begin{keywords}
Generative Adversarial Networks, Cumulant Generating Function, R{\'e}nyi Divergence, Image Generation
\end{keywords}

\section{Introduction}
A GAN is a two-player zero-sum game between a \emph{discriminator} and a \emph{generator}, both being neural networks with high learning capacity. GANs \cite{GPMXWOCB14} are powerful generative models capable of drawing new samples from an unknown distribution when only samples from that distribution are available. Their popularity stems from their ability to generate realistic samples from high-dimensional and complex distributions. In computer vision, GANs have been applied for (conditional) image generation \cite{mirza2014conditional,Denton2015LapGAN,radford2015unsupervised,odena2016conditional,karras2017progressive,brock2018large,karras2019style}, image synthesis from text (i.e., reverse captioning) \cite{pmlr-v48-reed16}, image-to-image translation \cite{cyclegan} and image super-resolution \cite{Ledig2016a}. In time-series data, GANs have been used for speech enhancement \cite{pascual2017segan}, speech synthesis \cite{saito2018statistical,kumar2019melgan} as well as for natural language processing \cite{che2017maximum, fedus2018maskgan} among other types of raw data. Several surveys and reviews on GANs are available in the literature \cite{Creswell_2018, 10.1145/3439723, journals/corr/abs-2001-06937, Jabbar2020ASO}. 
Moreover, the concept of adversarial training which essentially designates that the loss function is learned and thus data-driven and not predetermined by the user has been successfully applied in domain adaptation \cite{cyclegan,JMLR:v17:15-239,tzeng2017adversarial,pmlr-v80-hoffman18a} and representation disentanglement \cite{infogan}.

There are three ingredients that constitute a GAN: the architectures for both the generator and the discriminator, the training algorithm and the loss function which is further divided into the \emph{objective functional} to be optimized and the \emph{function space} where the discriminator belongs to. Over the years, the capacity of the neural networks has been increased resulting in significant gains in terms of naturalness and performance \cite{karras2017progressive,brock2018large,karras2019style,pmlr-v97-zhang19d}. Similarly, new normalization techniques such as spectral normalization \cite{miyato2018spectral} and new optimization algorithms \cite{daskalakis2018training} have been proposed. Two characteristic examples are progressively-learning GAN \cite{karras2017progressive} where the models are built in progressive levels of resolution and MelGAN \cite{kumar2019melgan} where weight normalization played a critical role for the generation of high quality speech. Several heuristics have been also devised \cite{salimans2016improved} to alleviate the difficulties of training GANs.

As already stated, the third ingredient in GANs' definition corresponds to the loss function. Since  their introduction, GANs have been described as a tractable approach to minimize a divergence or a distance between the real data distribution and the model distribution. Indeed, the original formulation of GANs \cite{GPMXWOCB14} can be seen as the minimization of the \emph{Shannon-Jensen divergence}. $f$-GAN \cite{nowozin2016f} is a generalization of vanilla GAN where a variational lower bound for the $f$-divergence is minimized. \emph{Least-Squares GAN} \cite{mao2017least} which minimizes a softened version of the Pearson $\chi^2$-divergence and hinge loss \cite{corr/LimY17} propose objective functionals aiming towards avoiding mode collapse issues. As it is well-documented, the training procedure of GANs often fails and several studies have suggested remedies to alleviate the observed hindrances. For instance, a recurring impediment with GAN training is the oscillatory behavior of the optimization algorithms due to the fact that the optimal solution is a saddle point of the loss function. Standard optimization algorithms such as stochastic gradient descent ascent (SGDA) may fail even for simple loss functions \cite{mertikopoulos2018cycles,daskalakis2018training}. 

One of the most successful approaches to improve stability is through the restriction of the function space of the discriminator. \emph{Wasserstein GAN} (WGAN) \cite{arjovsky2017wasserstein} which has been further improved in \cite{gulrajani2017improved} aims to minimize the Wasserstein distance which is equivalent to restricting to Lipschitz continuous functions. SNGAN \cite{miyato2018spectral} also restricts to Lipschitz continuous functions while MMD-GAN \cite{NIPS2017_dfd7468a} restricts the discriminator to belong to a reproducible Hilbert kernel space. Recently, the dissociation between the objective functional and the function space has been presented in a rigorous mathematical framework \cite{birrell2021fgammadivergences}. In this paper, we concentrate on the loss function and propose a new objective functional that further improves training stability and avoids mode collapse.

The novel objective functional is based on cumulant generating functions with the resulting model  referred as \emph{Cumulant GAN}.
A key advantage of cumulants over expectations is that cumulants capture \emph{higher-order} information about the underlying distributions which often results in more effective learning. Using this property, we rigorously prove that cumulant GAN converges exponentially fast when the gradient descent ascent algorithm is used for the special case with linear generator, linear discriminator and Gaussian distributions. Despite being a simple case, this theoretical result offers a rigorous and valuable differentiation between WGAN, which fails to converge, and the proposed cumulant GAN that demonstrates linear convergence to the  Nash equilibrium,  when the same gradient descent ascent algorithm is used on both.

Interestingly, the optimization of cumulant GAN  can be described as a \emph{weighting} extension of the standard stochastic gradient descent ascent where the samples that confuse the discriminator the most receive a higher weight, thus, contributing more to the update of the neural network's parameters. Furthermore, by applying a recent variational representation formula \cite{birrell2019distributional}, we show that cumulant GAN is capable of interpolating between several GAN formulations, thus, offering a partially-unified mathematical framework. Indeed, the optimization of the proposed loss function is equivalent to the minimization of a divergence for a wide set of cumulant GAN's hyper-parameter values. 
It is also worth-noting that despite  $f$-GAN's (partial) unification property \cite{nowozin2016f}, cumulant GAN and $f$-GAN formulations are not equivalent even when they minimize the same divergence and there is a subtle but important difference: the underlying variational representation which is eventually optimized is different. Ours is based on the Donsker-Varadhan representation formula while $f$-GAN is based on the Legendre transform of $f$-divergence. For KLD, Donsker-Varadhan formula is tighter than Legendre duality formula\footnote{Simply by the fact that $x\ge e\log x$; see also \cite{belghazi2018mutual}.}. Additionally, our formulation is computationally more manageable because the hyper-parameters of cumulant GAN are of continuous nature while $f$-GAN requires different $f$'s for different divergences.

Our numerical demonstrations aim to provide insights into cumulant GAN's representational ability and learnability advantages. Experiments on synthetic multi-modal data revealed the differences in the dynamics of learning for different hyper-parameter values of cumulant GAN. Even though the optimal solution is the same, the SGDA's dynamics for the training parameters driven by the chosen hyper-parameters' values resulted in very different distributional realizations with the two extremes being mode covering and mode selection. Moreover, using cumulant GAN, we were able to recover higher-order statistics even when the discriminator is linear. Finally, we demonstrated increased robustness as well as improved performance on image generation for both CIFAR10 and ImageNet datasets. We performed relative comparisons with WGAN under standard as well as distressed settings which is a primary  reason for training instabilities in GANs and demonstrated that not only cumulant GAN  is more stable but also it is better up to 68\% in terms of averaged inception score and up to 75\% in terms of Fr\'echet inception distance.

The paper is organized as follows. Section \ref{background:sec} introduces the necessary background theory, while Section \ref{cumgan:sec} defines cumulant GAN and highlights the derivation of several of its theoretical properties. In Section \ref{demo:sec}, numerical simulations on both synthetic and real datasets are presented, while Section \ref{concl:sec} concludes the paper.

\section{Background}
\label{background:sec}
The proposed GAN is a substantial generalization of  WGAN by means  of cumulant generating functions. These concepts are briefly discussed in this section.

\subsection{Wasserstein GAN}
WGAN \cite{arjovsky2017wasserstein,gulrajani2017improved} minimizes the Earth-Mover (a.k.a. 1-Wasserstein) distance and primarily aims to avoid gradient saturation during the training process. Based on the Kantorovich-Rubinstein duality formula for Wasserstein distance, the loss function of WGAN can be written as
\begin{equation}
\min_{G} \max_{D\in\mathcal{D}} \mathbb E_{p_{r}} [D(x)] - \mathbb E_{p_{g}} [D(x)],
\label{wgan:loss:eq}
\end{equation}
where $p_r$ and $p_g$ correspond to the real data distribution and the implicitly-defined model distribution, respectively. Namely, $p_g$ denotes the distribution of $G(z)$, where $G$ is the generator and $z\sim p_z(z)$ is a random input vector often following a standard normal or uniform distribution.
$D(\cdot)$ is the discriminator (called critic in the WGAN setup) while $\mathcal{D}$ is the function space of all 1-Lipschitz continuous functions. In WGAN, Lipschitz continuity is imposed by adding a (soft) regularization term on gradient values called Gradient Penalty (GP). It has been shown that GP regularization produces superior performance relative to weight clipping \cite{gulrajani2017improved}.

\subsection{Cumulant Generating Functions}
The cumulant generating function (CGF), also known as the log-moment generating function, is defined for a random variable with probability density function $p(x)$ as
\begin{equation}
\Lambda_{f,p} (\beta) = \log \mathbb E_p[e^{\beta f(x)}] \ ,
\label{cgf:eq}
\end{equation}
where $f$ is a measurable function with respect to $p$. The standard CGF is obtained when $f(x)=x$. CGF is a convex function with respect to  $\beta$ and it contains information for all moments of $p$. CGF also encodes   the tail behavior  of distributions and   plays a key role in the theory of Large Deviations for  the estimation of 
rare events \cite{dupuis2011weak}. A power series expansion of the CGF reveals that the lower order statistics dominate when $|\beta|\ll1$ while all statistics contribute to the CGF when $|\beta|\gg 1$.
In statistical mechanics, CGF is the logarithm of the partition function, $-\beta^{-1} \Lambda_{f,p} (-\beta)$ is called the Helmholtz free energy where $\beta$ is interpreted as the inverse temperature and $f$ as the Hamiltonian \cite{stoltz2010free}. 
Furthermore, it is straightforward to show that $\Lambda_{f,p} (0)=0$ as well as $\Lambda_{f,p}' (0)=\mathbb E_p[ f(x)]$, hence, the following limit for CGF holds
\begin{equation}
\lim_{\beta\to0} \beta^{-1} \Lambda_{f,p} (\beta) =  \mathbb E_p[ f(x)] \ .
\label{cgf:lim:eq}
\end{equation}
We are now ready to introduce the new  GAN.

\section{Cumulant GAN}
\label{cumgan:sec}

\subsection{Definition}
We define a novel GAN model by substituting the expectations in the loss function of WGAN with the respective CGFs. Thus, we propose to optimize the following mini-max problem:
\begin{equation}
\begin{aligned}
&\min_{G} \max_{D\in\mathcal D} \ \left\{(-\beta)^{-1} \Lambda_{D,p_r} (-\beta) - \gamma^{-1} \Lambda_{D,p_g} (\gamma) \right\} \equiv\\
&\min_{G} \max_{D\in\mathcal D} \ \underbrace{-\beta^{-1} \log \mathbb E_{p_{r}} [e^{-\beta D(x)}] - \gamma^{-1} \log \mathbb E_{p_{g}} [e^{\gamma D(x)}]}_{=L(\beta,\gamma)},
\end{aligned}
\label{cumgan:loss:eq}
\end{equation}
where the hyper-parameters $\beta$ and $\gamma$ are two non-zero real numbers which control the learning dynamics as well as the optimal solution. Since the loss function is the difference of two CGFs, we call $L(\beta,\gamma)$ in (\ref{cumgan:loss:eq}) the \emph{cumulant loss function} and the respective generative model as \emph{Cumulant GAN}. Throughout this paper, we assume the mild condition that both CGFs are finite for a neighborhood of $(0,0)$, therefore, the cumulant loss is well-defined for $|\beta|+|\gamma|<\epsilon$, for some $\epsilon>0$.

The definition of the loss function is extended  on the axes  and the origin of the $(\beta,\gamma)$-plane using the limit in  (\ref{cgf:lim:eq}). Hence, the cumulant loss function is defined for all  values of $\beta$ and $\gamma$ for which the new loss function is finite. It is straightforward to show that WGAN is a special case of cumulant GAN.

\begin{proposition}
Let $\mathcal D$ be the set of all 1-Lipschitz continuous functions. Then, cumulant GAN with $(\beta,\gamma)=(0,0)$ is equivalent to WGAN.
\end{proposition}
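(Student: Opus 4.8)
The plan is to simply evaluate the cumulant loss function $L(\beta,\gamma)$ at the point $(\beta,\gamma)=(0,0)$ via the extension-by-limit that was introduced right after (\ref{cumgan:loss:eq}), and to observe that the resulting objective is identical — term by term — to the Kantorovich--Rubinstein integrand of WGAN in (\ref{wgan:loss:eq}). Since the minimization over $G$ and the maximization over $D$ are performed over exactly the same sets in both problems (in particular $\mathcal D$ is, by hypothesis, the class of $1$-Lipschitz continuous functions in both), equality of the objectives is enough to conclude that the two minimax problems — and hence their optimizers and optimal values — coincide, which is the claimed equivalence.

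Concretely, I would first recall that, by assumption, both CGFs $\Lambda_{D,p_r}$ and $\Lambda_{D,p_g}$ are finite on a neighborhood of the origin, so $\Lambda_{D,p}(0)=0$ and $\Lambda_{D,p}'(0)=\mathbb E_p[D(x)]$, which is exactly the content of the limit (\ref{cgf:lim:eq}). Applying (\ref{cgf:lim:eq}) to the first term of $L$ with the substitution $\beta'=-\beta$ gives $\lim_{\beta\to 0}(-\beta)^{-1}\Lambda_{D,p_r}(-\beta)=\mathbb E_{p_r}[D(x)]$, and applying it directly to the second term gives $\lim_{\gamma\to 0}\gamma^{-1}\Lambda_{D,p_g}(\gamma)=\mathbb E_{p_g}[D(x)]$. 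Therefore $L(0,0)=\mathbb E_{p_r}[D(x)]-\mathbb E_{p_g}[D(x)]$, and substituting this into (\ref{cumgan:loss:eq}) reproduces (\ref{wgan:loss:eq}) verbatim.

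The only point that warrants a sentence of care — and the place where a reader might expect an obstacle — is the distinction between the \emph{pointwise} definition of $L$ at the origin and an \emph{interchange of limits}: the statement as phrased needs only the former, since $L(0,0)$ is \emph{defined} to be the (finite) limit of the CGF ratios, so there is nothing to interchange with $\min_G\max_{D}$. If one instead wished to assert the stronger continuity-type claim that $L(0,0)=\lim_{(\beta,\gamma)\to(0,0)}\min_G\max_{D\in\mathcal D}L(\beta,\gamma)$, one would invoke the neighborhood-finiteness assumption on the CGFs together with dominated convergence to pass the limit inside the expectations uniformly over $D\in\mathcal D$; but this is not required here, so the proof reduces to the elementary limit computation above.
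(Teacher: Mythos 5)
Your proposal is correct and follows essentially the same route as the paper: the paper's proof consists precisely of the observation that $\lim_{\beta,\gamma\to 0} L(\beta,\gamma) = L(0,0) = \mathbb E_{p_r}[D(x)] - \mathbb E_{p_g}[D(x)]$, which recovers the WGAN objective over the same $1$-Lipschitz class. Your additional remarks on the sign substitution $\beta'=-\beta$ and on the pointwise definition of $L(0,0)$ versus a limit interchange are fine elaborations of the same elementary computation.
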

\begin{proof}
The proposition is a consequence of the fact that
\begin{equation*}
\lim_{\beta,\gamma\to 0} L(\beta,\gamma) = L(0,0) = \mathbb E_{p_{r}} [D(x)] - \mathbb E_{p_{g}} [D(x)].
\end{equation*}
\end{proof}

\begin{remark}
The same proof applies when $\mathcal D$  is the set of all measurable and bounded functions and cumulant GAN with $(\beta,\gamma)=(0,0)$ is equivalent to minimizing the Radon metric between the two distributions which corresponds to the origin in Fig. \ref{beta:gamma:plane:fig}.
\end{remark}

Next, we rigorously demonstrate that cumulant GAN can be seen as a unified and smooth interpolation between several well-known  divergence minimization problems.

\subsection{KLD, Reverse KLD and R{\'e}nyi Divergence as Special Cases}
A major inconvenience of many GAN formulations is their inability to interpret the loss function value and understand the properties of the obtained solution. Even when the stated goal is to minimize a divergence as in the  original GAN and the $f$-GAN, the utilization of training tricks such as a non-saturating generators may result in the minimization of something completely different as it was recently observed \cite{shannon2020the}. In contrast, the proposed cumulant loss function can be interpreted for several choices of its hyper-parameters. Below there is a list of values for $\beta$ and $\gamma$ that result to interpretable loss functions. Indeed, several well-known divergences are recovered when the function space for the discriminator is the set of all measurable and bounded functions. In the following, we make the convention that a forward divergence or simply divergence is a divergence that uses the probability ratio, $\frac{p_r}{p_g}$, while a reverse divergence uses the reciprocal ratio.

\begin{theorem}
Let $\mathcal D$ be the set of all bounded and measurable functions. Then, the optimization of cumulant loss in (\ref{cumgan:loss:eq}) is equivalent to the minimization of
\begin{enumerate}
\item[a.] \underline{Kullback-Leibler divergence} for $(\beta,\gamma)=(0,1)$:
\[
\min_{G} \max_{D\in\mathcal D} L(0,1) \equiv \min_{G} D_{KL} \left(p_r||p_g\right).
\]

\item[b.] \underline{Reverse KLD} for $(\beta,\gamma)=(1,0)$:
\[
\min_{G} \max_{D\in\mathcal D} L(1,0) \equiv \min_{G} D_{KL} \left(p_g||p_r\right).
\]

\item[c.] \underline{R{\'e}nyi divergence} for $(\beta,\gamma)=(\alpha,1-\alpha)$ with $\alpha\neq 0$ and $\alpha\neq 1$:
\[
\min_{G} \max_{D\in\mathcal D} L(\alpha,1-\alpha) \equiv \min_{G} \mathcal R_{\alpha} \left(p_g||p_r\right),
\]
as well as for $(\beta,\gamma)=(1-\alpha,\alpha)$ with $\alpha\neq 0$ and $\alpha\neq 1$:
\[
\min_{G} \max_{D\in\mathcal D} L(1-\alpha,\alpha) \equiv \min_{G} \mathcal R_{\alpha} \left(p_r||p_g\right),
\]
where $\mathcal R_{\alpha} \left(p||q\right)$ is the R{\'e}nyi divergence defined by
\[
\mathcal R_{\alpha} \left(p||q\right) = \frac{1}{\alpha(1-\alpha)}\log \mathbb E_q\left[\left(\frac{p}{q}\right)^\alpha\right],
\]
when $p$ and $q$ are absolutely continuous with respect to each other and $\alpha>0$\footnote{The definition is extended for $\alpha<0$ using the symmetry identity
$\mathcal R_{\alpha} \left(p||q\right) = \mathcal R_{1-\alpha} \left(q||p\right)$.}.
\end{enumerate}
\end{theorem}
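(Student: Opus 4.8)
The plan is to read each maximization $\max_{D\in\mathcal D}L(\cdot,\cdot)$ as a Donsker--Varadhan-type variational representation of the advertised divergence, to identify the optimal discriminator explicitly, and then to take $\min_G$. A useful preliminary remark is that $L(\beta,\gamma)$ is invariant under the shift $D\mapsto D+c$: adding $c$ multiplies $\mathbb E_{p_r}[e^{-\beta D}]$ by $e^{-\beta c}$ and $\mathbb E_{p_g}[e^{\gamma D}]$ by $e^{\gamma c}$, and the resulting $+c$ and $-c$ contributions cancel in $-\beta^{-1}\log(\cdot)-\gamma^{-1}\log(\cdot)$. Hence it suffices to locate a maximizer up to an additive constant, and in all three cases I expect it to be the log-likelihood ratio $D^\star=\log(p_r/p_g)$ (suitably truncated so as to stay in $\mathcal D$).

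For part (a), the limit (\ref{cgf:lim:eq}) gives $L(0,1)=\mathbb E_{p_r}[D]-\log\mathbb E_{p_g}[e^{D}]$, so $\max_{D\in\mathcal D}L(0,1)$ is exactly the Donsker--Varadhan formula for $D_{KL}(p_r\|p_g)$, with the supremum approached along truncations of $D^\star$. For part (b), the same limit gives $L(1,0)=\mathbb E_{p_g}[-D]-\log\mathbb E_{p_r}[e^{-D}]$; the substitution $\tilde D=-D$, which ranges over $\mathcal D$ as $D$ does, turns this into the Donsker--Varadhan functional for $D_{KL}(p_g\|p_r)$. The two claims then follow by minimizing the respective divergences over $G$.

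For part (c) I would lead with the direct calculus-of-variations argument: setting the functional first-order condition of $L(\alpha,1-\alpha)$ to zero yields $p_r e^{-\alpha D}/\mathbb E_{p_r}[e^{-\alpha D}]=p_g e^{(1-\alpha)D}/\mathbb E_{p_g}[e^{(1-\alpha)D}]$, whose solution is $D^\star=\log(p_r/p_g)$ (the shift-invariance above absorbing the leftover constant). Substituting back and noting that at $D^\star$ the two normalizing integrals coincide, $\mathbb E_{p_r}[(p_r/p_g)^{-\alpha}]=\mathbb E_{p_g}[(p_r/p_g)^{1-\alpha}]=\mathbb E_{p_r}[(p_g/p_r)^{\alpha}]$, and collapsing the two logarithms with $\alpha^{-1}+(1-\alpha)^{-1}=[\alpha(1-\alpha)]^{-1}$, one reproduces precisely $\mathcal R_\alpha(p_g\|p_r)$ as normalized in the statement; the case $(\beta,\gamma)=(1-\alpha,\alpha)$ then follows either by repeating the computation with $p_r,p_g$ interchanged or, more economically, from the symmetry identity $\mathcal R_\alpha(p\|q)=\mathcal R_{1-\alpha}(q\|p)$ applied to the parameter swap $\alpha\leftrightarrow 1-\alpha$. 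As a sanity check, sending $\alpha\to 0$ or $\alpha\to 1$ collapses part (c) onto parts (a) and (b), consistently with $\mathcal R_\alpha\to D_{KL}$ in these limits. Alternatively, one can simply invoke the Rényi variational representation of \cite{birrell2019distributional}, which asserts $\sup_{D\in\mathcal D}L(\alpha,1-\alpha)=\mathcal R_\alpha(p_g\|p_r)$ directly for $p_r,p_g$ mutually absolutely continuous, and apply it with the two orderings of the arguments.

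The main obstacle is turning these formal computations into a rigorous statement over the restricted class $\mathcal D$ of bounded measurable functions, which can fail in two ways. First, $D^\star=\log(p_r/p_g)$ need not be bounded when the density ratio is unbounded, so one must verify that $\sup_{D\in\mathcal D}L$ still equals the divergence via a truncation argument (monotone/dominated convergence along $D^\star_n=(-n)\vee(D^\star\wedge n)$); for KL this is the standard proof of the Donsker--Varadhan identity. Second, and more delicate, for $\alpha\notin(0,1)$ one of $\beta,\gamma$ becomes negative, so $D\mapsto L(\alpha,1-\alpha)(D)$ is a sum of a concave and a convex functional and the first-order condition no longer certifies a global maximum; the $\alpha\in(0,1)$ case can be closed by hand using concavity, but the complementary range genuinely needs the convex-duality argument behind \cite{birrell2019distributional} (passing to a tilted reference measure to reduce to the Donsker--Varadhan inequality). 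Carrying out that reduction cleanly, together with the finiteness/integrability hypotheses it requires, is the crux of the proof.
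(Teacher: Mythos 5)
Your proposal is correct and, for parts (a) and (b), coincides with the paper's proof: both reduce $L(0,1)$ and $L(1,0)$ (after the substitution $D\mapsto -D$) to the Donsker--Varadhan representation of the two KL divergences. For part (c) the route differs in emphasis: the paper simply rewrites $L(\alpha,1-\alpha)$ with $D'=-D$ and invokes the R\'enyi variational formula of \cite{birrell2019distributional} (reproved in its Appendix A), where the one-sided bound follows for \emph{every} $\alpha$ from the risk-sensitive variational formula $\frac{1}{\alpha-1}\log\mathbb E_p[e^{(\alpha-1)f}]=\inf_q\{\frac{1}{\alpha}\log\mathbb E_q[e^{\alpha f}]+\mathcal R_\alpha(p\|q)\}$ of \cite{dupuis:renyi}, with attainment checked at $f=\log\frac{dp}{dq}$. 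You instead lead with the Euler--Lagrange condition, which correctly identifies $D^\star=\log(p_r/p_g)$ up to the additive constant your shift-invariance remark absorbs, and you rightly note that this certifies a global maximum only when both cumulant terms are concave, i.e.\ $\alpha\in(0,1)$, deferring to the convex-duality argument of \cite{birrell2019distributional} outside that range --- which is exactly the paper's tool; your truncation remark for unbounded $\log(p_r/p_g)$ also addresses a point the paper's appendix handles only by assumption. So your version buys an explicit optimizer and a self-contained argument on $\alpha\in(0,1)$, while the paper's citation covers all $\alpha$ at once; neither has a gap. One caveat: substituting $D^\star$ yields $-\frac{1}{\alpha(1-\alpha)}\log\mathbb E_{p_r}\!\left[\left(\frac{p_g}{p_r}\right)^{\alpha}\right]=\frac{1}{\alpha(\alpha-1)}\log\mathbb E_{p_r}\!\left[\left(\frac{p_g}{p_r}\right)^{\alpha}\right]$, which is the (nonnegative) normalization used in the paper's Appendix A rather than the $\frac{1}{\alpha(1-\alpha)}$ prefactor displayed in the theorem; the paper itself is inconsistent on this sign, and your phrase ``as normalized in the statement'' inherits that slip, so state the prefactor as $\frac{1}{\alpha(\alpha-1)}$ to keep the identity exact.
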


\begin{proof}
a. Using the definition of $L(\beta, \gamma)$, we have: 
\begin{equation*}
\begin{aligned}
\max_{D\in\mathcal D} L(0,1)
&= \max_{D\in\mathcal D} \left\{ \mathbb E_{p_r}[D(x)] -\log \mathbb E_{p_g} [e^{D(x)}]\right\} \\
&= D_{KL} \left(p_r||p_g\right),
\end{aligned}
\end{equation*}\noindent
where the last equation is the Donsker-Varadhan variational formula \cite{donsker1983asymptotic,dupuis2011weak}.

\noindent
b. Similarly,

\begin{equation*}
\begin{aligned}
\max_{D\in\mathcal D} L(1,0)
&= \max_{D\in\mathcal D} \left\{-\log \mathbb E_{p_r} [e^{-D(x)}] - \mathbb E_{p_g}[D(x)]\right\} \\
&= \max_{D'=-D\in\mathcal D} \left\{ \mathbb E_{p_g}[D'(x)] -\log \mathbb E_{p_r} [e^{D'(x)}]\right\} \\
&= D_{KL} \left(p_g||p_r\right),
\end{aligned}
\end{equation*}
where we applied again the Donsker-Varadhan variational formula.

\noindent
c. Generalizing a. and b. we now have:
{\small
\begin{equation*}
\begin{aligned}
&\max_{D\in\mathcal D} L(\alpha,1-\alpha) \\
=& \max_{D\in\mathcal D} \left\{-\frac{1}{\alpha}\log \mathbb E_{p_r} [e^{-\alpha D(x)}] - \frac{1}{1-\alpha}\log \mathbb E_{p_g} [e^{(1-\alpha) D(x)}] \right\} \\
=& \max_{D'=-D\in\mathcal D} \left\{ \frac{1}{\alpha-1}\log \mathbb E_{p_g} [e^{(\alpha-1) D'(x)}] - \frac{1}{\alpha}\log \mathbb E_{p_r} [e^{\alpha D'(x)}]\right\} \\
=& \ \mathcal R_{\alpha} \left(p_g||p_r\right),
\end{aligned}
\end{equation*}}\noindent
where the last equation is an extension of the  Donsker-Varadhan variational formula to   R{\'e}nyi divergence and was recently proved in (\cite[Theorem 3.1]{birrell2019distributional}). For completeness, we provide a  proof of the R{\'e}nyi divergence variational representation in Appendix A of Supplementary Materials.

The proof  for the case $L(1-\alpha,\alpha)$ is similar and agrees with  the symmetry identity for the R{\'e}nyi divergence,  $\mathcal R_{\alpha} \left(p||q\right) = \mathcal R_{1-\alpha} \left(q||p\right)$.
\end{proof}



The R{\'e}nyi divergence, $\mathcal R_{\alpha}$, interpolates between KLD ($\alpha\to0$) and reverse KLD ($\alpha\to1$). Interestingly, there are additional special cases that belong to the family of R{\'e}nyi divergences. The following corollary states some of them, while Fig.~\ref{beta:gamma:plane:fig} depicts schematically the obtained divergences and distances on the $(\beta,\gamma)$-plane.

\begin{corollary}
Under the same assumption as in Theorem 1, the optimization of (\ref{cumgan:loss:eq}) is equivalent to the minimization of
\begin{enumerate}
\item[a.] \underline{\it Hellinger distance} for $(\beta,\gamma)=(\frac{1}{2},\frac{1}{2})$:
{\[
\min_{G} \max_{D\in\mathcal D} L(\frac{1}{2},\frac{1}{2}) \equiv \min_{G} -4\log\left(1-D_{H}^2 \left(p_g,p_r\right)\right),
\]}
where {$D_{H}^2 \left(p,q\right)= \frac{1}{2}\mathbb E_{q}\left[\left((\frac{p}{q})^{1/2}-1\right)^2\right]$} is the square of the Hellinger distance \cite{tsybakov2008introduction}.

\item[b.] \underline{$\chi^2$-divergence} for $(\beta,\gamma)=(-1,2)$:
{\[
\min_{G} \max_{D\in\mathcal D} L(-1,2) \equiv \min_{G} \frac{1}{2}\log \left(1 + \chi^2 \left(p_r||p_g\right)\right),
\]}
and \underline{reverse $\chi^2$-divergence} for $(\beta,\gamma)=(2,-1)$:
{\[
\min_{G} \max_{D\in\mathcal D} L(2,-1) \equiv \min_{G} \frac{1}{2}\log \left(1 + \chi^2 \left(p_g||p_r\right)\right),
\]} 
where {$\chi^2 \left(p||q\right)=\mathbb E_{q}\left[(\frac{p}{q}-1)^2\right]$} is the $\chi^2$-divergence\footnote{Forward $\chi^2$-divergence is often called Pearson $\chi^2$-divergence while the reverse $\chi^2$-divergence is often called Neyman $\chi^2$-divergence.} \cite{tsybakov2008introduction}.

\item[c.] \underline{All-mode covering} or worst-case regret in minimum description length principle \cite{grunwald2007minimum} for $(\beta,\gamma)=(\infty,-\infty)$:
{\small\[
\min_{G} \lim_{\alpha\to\infty}\alpha \max_{D\in\mathcal D} L(\alpha,1-\alpha) \equiv \min_{G} \log \left(\esssup_{x\in\supp(p_g)} \frac{p_g(x)}{p_r(x)}\right),
\]}
where $\esssup$ is the essential supremum of a function.

\item[d.] \underline{Largest-mode selector} for $(\beta,\gamma)=(-\infty,\infty)$:
{\small\[
\min_{G} \lim_{\alpha\to\infty} \alpha \max_{D\in\mathcal D} L(1-\alpha,\alpha) \equiv \min_{G} \log \left(\esssup_{x\in\supp(p_r)} \frac{p_r(x)}{p_g(x)}\right).
\]}

\end{enumerate}
\end{corollary}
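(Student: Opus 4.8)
The plan is to obtain all four items from Theorem 1(c), which already reduces the inner maximization to a R\'enyi divergence: $\max_{D\in\mathcal D}L(\alpha,1-\alpha)=\mathcal R_\alpha(p_g\|p_r)$ and $\max_{D\in\mathcal D}L(1-\alpha,\alpha)=\mathcal R_\alpha(p_r\|p_g)$, with the symmetry identity $\mathcal R_\alpha(p\|q)=\mathcal R_{1-\alpha}(q\|p)$ extending this to negative orders. After this reduction each claim is a direct evaluation of $\mathcal R_\alpha$ at one specific order $\alpha$, which amounts to an elementary identity for $\mathbb E_q[(p/q)^\alpha]$, followed in items (c)--(d) by an $\alpha\to\infty$ asymptotic. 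The standing hypothesis that $p_r$ and $p_g$ are mutually absolutely continuous (inherited from Theorem 1(c)) is used only at the very end.

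For item (a) I would set $\alpha=\tfrac12$: then $\max_{D\in\mathcal D}L(\tfrac12,\tfrac12)=\mathcal R_{1/2}(p_g\|p_r)$, and $\mathcal R_{1/2}$ involves $\mathbb E_{p_r}[(p_g/p_r)^{1/2}]=\int\sqrt{p_g p_r}\,dx$, the Hellinger affinity (Bhattacharyya coefficient). Expanding $\tfrac12\int(\sqrt{p_g}-\sqrt{p_r})^2\,dx=1-\int\sqrt{p_g p_r}\,dx$ and comparing with the definition of $D_H^2$ gives $\int\sqrt{p_g p_r}\,dx=1-D_H^2(p_g,p_r)$; substituting into the R\'enyi formula produces the claimed $-4\log\bigl(1-D_H^2(p_g,p_r)\bigr)$. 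For item (b), $(\beta,\gamma)=(-1,2)$ is the order $\alpha=-1$, so I would first apply the symmetry identity to rewrite $\mathcal R_{-1}(p_g\|p_r)=\mathcal R_{2}(p_r\|p_g)$, and then use $\mathbb E_{p_g}[(p_r/p_g)^2]=\int p_r^2/p_g\,dx=1+\chi^2(p_r\|p_g)$ (immediate from expanding $\mathbb E_{p_g}[(p_r/p_g-1)^2]$), which yields $\tfrac12\log\bigl(1+\chi^2(p_r\|p_g)\bigr)$; the reverse case $(2,-1)$ is the same computation at order $\alpha=2$ with $p_r$ and $p_g$ interchanged and no symmetry step needed.

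For items (c)--(d) I would take the limit. By Theorem 1(c), $\lim_{\alpha\to\infty}\alpha\max_{D\in\mathcal D}L(\alpha,1-\alpha)=\lim_{\alpha\to\infty}\alpha\,\mathcal R_\alpha(p_g\|p_r)$; cancelling the explicit $\alpha$ against the $\alpha$ in the normalization of $\mathcal R_\alpha$ leaves an expression of the form $c_\alpha\log\bigl\|p_g/p_r\bigr\|_{L^\alpha(p_r)}$ with $c_\alpha\to1$, i.e. (up to the normalization bookkeeping) the classical order-$\infty$ R\'enyi, or max-, divergence. The limit is then evaluated by the standard fact that, for a probability measure, the $L^\alpha$ norm increases to the $L^\infty$ norm (allowing the value $+\infty$): $\bigl\|p_g/p_r\bigr\|_{L^\alpha(p_r)}\to\esssup_{p_r}(p_g/p_r)$. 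Finally, mutual absolute continuity forces $\{p_r>0\}$ and $\{p_g>0\}$ to agree up to a null set, so $\esssup_{p_r}(p_g/p_r)=\esssup_{x\in\supp(p_g)}p_g(x)/p_r(x)$, which is the asserted all-mode-covering quantity; item (d) is identical with $p_r$ and $p_g$ exchanged, i.e. order $(1-\alpha,\alpha)$, giving $\log\esssup_{x\in\supp(p_r)}p_r(x)/p_g(x)$.

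The main obstacle I anticipate is the rigor of this $\alpha\to\infty$ step in (c)--(d): one must justify the convergence of the power means $\|p_g/p_r\|_{L^\alpha(p_r)}$ to the essential supremum including the unbounded case, keep the converging normalization prefactor cleanly separated from the log-moment term, and argue that the essential supremum relative to the measure $p_r$ coincides with the one over $\supp(p_g)$ --- which is precisely where the mutual-absolute-continuity hypothesis enters. Everything else is routine manipulation of the definitions of $D_H^2$, $\chi^2$ and $\mathcal R_\alpha$, with the overall signs fixed by the normalization chosen for $\mathcal R_\alpha$ in Theorem 1(c).
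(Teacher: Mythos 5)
Your proposal is correct and follows essentially the same route as the paper: everything is reduced via Theorem 1(c) to the Rényi divergence at specific orders, and then one invokes the identities $\mathcal R_{1/2}(p\|q)=-4\log\bigl(1-D_H^2(p,q)\bigr)$, $\mathcal R_{2}(p\|q)=\tfrac12\log\bigl(1+\chi^2(p\|q)\bigr)$, $\mathcal R_{-1}(p\|q)=\mathcal R_{2}(q\|p)$, and $\lim_{\alpha\to\infty}\alpha\,\mathcal R_\alpha(p\|q)=\log\esssup_{\supp(q)}(p/q)$, which the paper simply cites while you derive them by elementary computation and the monotone convergence of $L^\alpha$ norms to the essential supremum. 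Your sign bookkeeping matches the normalization $\tfrac{1}{\alpha(\alpha-1)}$ used in the paper's appendix (the $\tfrac{1}{\alpha(1-\alpha)}$ displayed in Theorem 1 appears to be a sign typo), so no gap remains.
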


\begin{proof} 
All cases a.-d. follow from Theorem 1.c as special instances of R{\'e}nyi divergence:\\ $R_{1/2}(p\|q)=-4\log\left(1-D_{H}^2 \left(p,q\right)\right)$, \\
$R_{2}(p\|q)=\frac{1}{2}\log \left(1 + \chi^2 \left(p||q\right)\right)$, $R_{-1}(p\|q)=R_{2}(q\|p)$\\
and $\lim_{\alpha \to \infty} \alpha R_{\alpha}(p\|q)=\log \left(\esssup_{x\in\supp(q)} \frac{p(x)}{q(x)}\right)$. \\
We refer to \cite{minka2005divergence,bishop2006pattern} and the references therein for detailed proofs.
\end{proof}

The flexibility of the two hyper-parameters is significant since  they offer a simple recipe to remedy some of the most frequent issues of GAN training. For instance, KLD tends to cover all the modes of the real distribution while reverse KLD tends to select a subset of them \cite{minka2005divergence,bishop2006pattern,hernandez2016black,li2016renyi,shannon2020the} (see also Fig.~\ref{kld:vs:rkld:fig} for a benchmark). Therefore, if mode collapse is observed during training, then, increasing $\gamma$ with $\beta=1-\gamma$ will push the generator towards generating a wider variety of samples. 
In the other limit, more realistic samples (e.g. less blurry images)  but with  less variability will be generated when $\beta$ is increased while $\gamma=1-\beta$.

We also expand the interpretation of the $(\beta,\gamma)$-plane to the case where $\beta+\gamma>0$ as the following proposition demonstrates. We show that the half-lines beginning at the origin and passing through the line $\beta+\gamma=1$ define the same divergence therefore we call them divergence rays or {\bf d-rays} for shorthand as depicted in Fig. \ref{beta:gamma:plane:fig} (gray half-lines).

\begin{proposition}
Let $\alpha\in\mathbb R\smallsetminus\{0,1\}$ and $\delta\in[\delta_{\min},\delta_{\max}]$ with $0<\delta_{\min}<\delta_{\max}<\infty$ and $\mathcal D$ be the set of all bounded and measurable functions. Then, the optimization of cumulant loss in (\ref{cumgan:loss:eq}) is equivalent to the minimization of {\it scaled R\'enyi divergence} for $(\beta,\gamma)=(\delta(1-\alpha), \delta\alpha)$:
\[
\min_{G} \max_{D\in\mathcal D} L(\delta(1-\alpha), \delta\alpha) \equiv \min_{G}  \frac{1}{\delta} \mathcal R_{\alpha} \left(p_r||p_g\right),
\]
\end{proposition}
\begin{proof}
The maximization part of cumulant GAN becomes 
{\small
\begin{equation*}
\begin{aligned}
&\max_{D\in\mathcal D} L(\delta(1-\alpha), \delta\alpha) \\
=& \max_{D\in\mathcal D} \left\{-\frac{1}{\delta(1-\alpha)}\log \mathbb E_{p_r} [e^{-\delta(1-\alpha) D(x)}] - \frac{1}{\delta\alpha}\log \mathbb E_{p_g} [e^{\delta\alpha D(x)}] \right\} \\
=& \frac{1}{\delta} \max_{D'=\delta D\in\mathcal D} \left\{ \frac{1}{(\alpha-1)}\log \mathbb E_{p_r} [e^{(\alpha-1) D'(x)}] - \frac{1}{\alpha}\log \mathbb E_{p_g} [e^{\alpha D'(x)}]\right\} \\
=& \ \frac{1}{\delta} \mathcal R_{\alpha} \left(p_r||p_g\right),
\end{aligned}
\end{equation*}}\noindent
which completes the proof since $\delta$ is positive and far from 0 or $\infty$ thus $\delta D\in\mathcal D$.
\end{proof}

\begin{remark}
From a practical perspective, the boundedness condition required in the above theoretical formulation can be easily enforced by considering a clipped discriminator with clipping factor $M$, i.e., $D_M(x) = M\tanh(\frac{D(x)}{M})$. On the other hand, the set of all measurable functions is a very large class of functions and it might be difficult to be represented by a neural network. However, one can approximate measurable functions with continuous functions via Lusin's theorem \cite{Folland:book} which states that every finite Lebesgue measurable function is approximated arbitrarily well by a continuous function except on a set of arbitrarily small Lebesgue measure. Therefore, a sufficiently-large neural network can accurately approximate any measurable function.
\end{remark}

\begin{figure}[ht]
\begin{center}
\centerline{\includegraphics[width=.7\columnwidth]{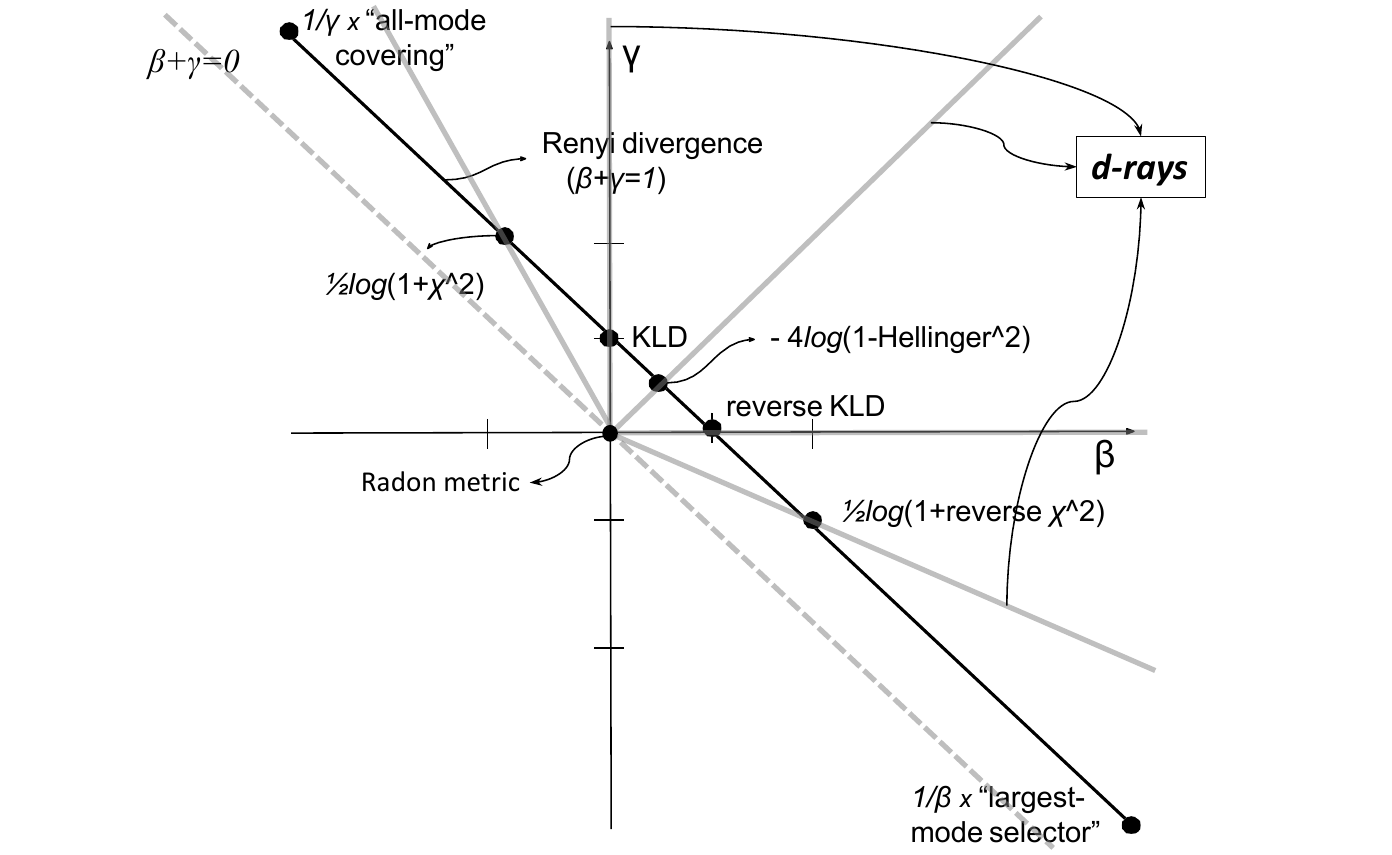}}
\caption{Special cases of \emph{cumulant GAN}. Line defined by $\beta+\gamma=1$ has a point symmetry. The central point, $(\frac{1}{2},\frac{1}{2})$, corresponds to the Hellinger distance. For each point, $(\alpha, 1-\alpha)$, there is a symmetric one, i.e., $(1-\alpha,\alpha)$, which has the same distance from the symmetry point. The respective divergences have reciprocal probability ratios (e.g., KLD \& reverse KLD, $\chi^2$-divergence \& reverse $\chi^2$-divergence, etc.). Each point on the ray starting at the origin and pass through the point $(\alpha, 1-\alpha)$ also corresponds to (scaled) R\'enyi divergence of order $\alpha$. These half-lines are called d-rays.}
\label{beta:gamma:plane:fig}
\end{center}
\end{figure}

\subsection{Cumulant GAN as a Weighted Version of the SGDA Algorithm}

The parameter estimation for the cumulant GAN is performed using the SGDA algorithm. Algorithm~\ref{cumgan:alg} presents the core part of SGDA's update steps where we exclude any regularization terms for clarity purposes. Namely, $\eta$ and $\theta$ are the parameters of the discriminator and the generator, respectively, while $\lambda$ is the learning rate. The proposed loss function is not the difference of two expected values, therefore, the order between differentiation and expectation approximation does matter. We choose to first approximate the expected values with the respective statistical averages as
\begin{equation}
\hat{L}_m(\beta,\gamma) = -\frac{1}{\beta} \log \sum_{i=1}^m e^{-\beta D(x_i)} - \frac{1}{\gamma} \log \sum_{i=1}^m e^{\gamma D(G(z_i))} \ . 
\label{cumgan:loss:stat:eq}
\end{equation}\noindent
Then, we apply the differentiation operator which results in a weighted version of SGDA as shown in Algorithm~\ref{cumgan:alg}. Interestingly, several recent papers \cite{burda2015importance,li2016renyi,hu2018on,hjelm2018boundary,pantazis2019training} included  a weighting perspective in their optimization approach.

\begin{algorithm}[th]
\caption{Core of SGDA Iteration}
\label{cumgan:alg}
\begin{algorithmic}
   \STATE {\bfseries Input:} data batch: $\{x_i\}$, noise batch: $\{z_i\}$
   \FOR{$k$ steps}
   \STATE 
   {\footnotesize\begin{equation*} 
   \eta \leftarrow \eta + \lambda \left(\sum_{i=1}^{m}w_i^\beta \nabla_{\eta} D\left(x_i\right) - \sum_{i=1}^{m}w_i^\gamma \nabla_{\eta} D\left(G\left(z_i\right)\right)\right) 
   \end{equation*}}
   \ENDFOR
   \STATE
   {\footnotesize\begin{equation*}
   \theta \leftarrow \theta + \lambda \left(\sum_{i=1}^{m}w_i^\gamma \nabla_{\theta} D\left(G\left(z_i\right)\right)\right) 
   \end{equation*}}
\end{algorithmic}
\end{algorithm}

The difference between WGAN and cumulant GAN for the update steps is the weights $w_i^\beta$ and $w_i^\gamma$. In WGAN, the weights are constant and equal to $\frac{1}{m}$ while in cumulant GAN they are defined for any $i=1,...,m$ by
{\[
w_i^\beta=\frac{e^{-\beta D\left(x_i\right)}}{\sum_{j=1}^m e^{-\beta D\left(x_j\right)}}, \ \ 
\text{and}, \ \ 
w_i^\gamma=\frac{e^{\gamma D\left(G(z_i)\right)}}{\sum_{j=1}^m e^{\gamma D\left(G(z_j)\right)}}.
\]}\noindent
The weights redistribute the sample distributions based on the assessment of the current discriminator. Fig.~\ref{cumgan:weights:fig} qualitatively demonstrates the change of the weight relative to uniform weights for $\beta,\gamma>0$. The weights place more emphasis  on  the real samples  associated with  the smallest $D(x_i)$ values. Similarly they place more emphasis on the synthetic samples that give the highest $D(G(z_i))$ values. A quantitative demonstration of the weights and how they evolve during the training process is presented in Appendix C (Figs. 1--3) of Supplementary Materials. 

The intuition behind the weighting mechanism is that samples that confuse the discriminator, i.e., the samples around the ``fuzzy'' decision boundary, are more valuable for the training process than  samples that are easily distinguished, thus, they should weigh more. Essentially, the discriminator is updated with samples produced by a better  generator than the current one, as well as with more challenging real samples. Similarly, the generator is also updated using samples from a generator which is better than the current one. 
Overall, due to the use of the weights $w_i^\beta, w_i^\gamma$ in Algorithm 1,  both generator and discriminator updates will be more affected  by synthetic samples that are more  indistinguishable from the real ones.

Additionally, the update of the discriminator is performed $k$ times more than the generator's update offering two important advantages. First, more iterations for the discriminator implies that it better distinguishes the real data from the generated ones, making the weighting perspective more valid. Second, it better approximates the optimal discriminator, thus, the theory presented in the previous section becomes more credible in practice. 

\begin{figure}[ht]
\begin{center}
\centerline{\includegraphics[width=.7\columnwidth]{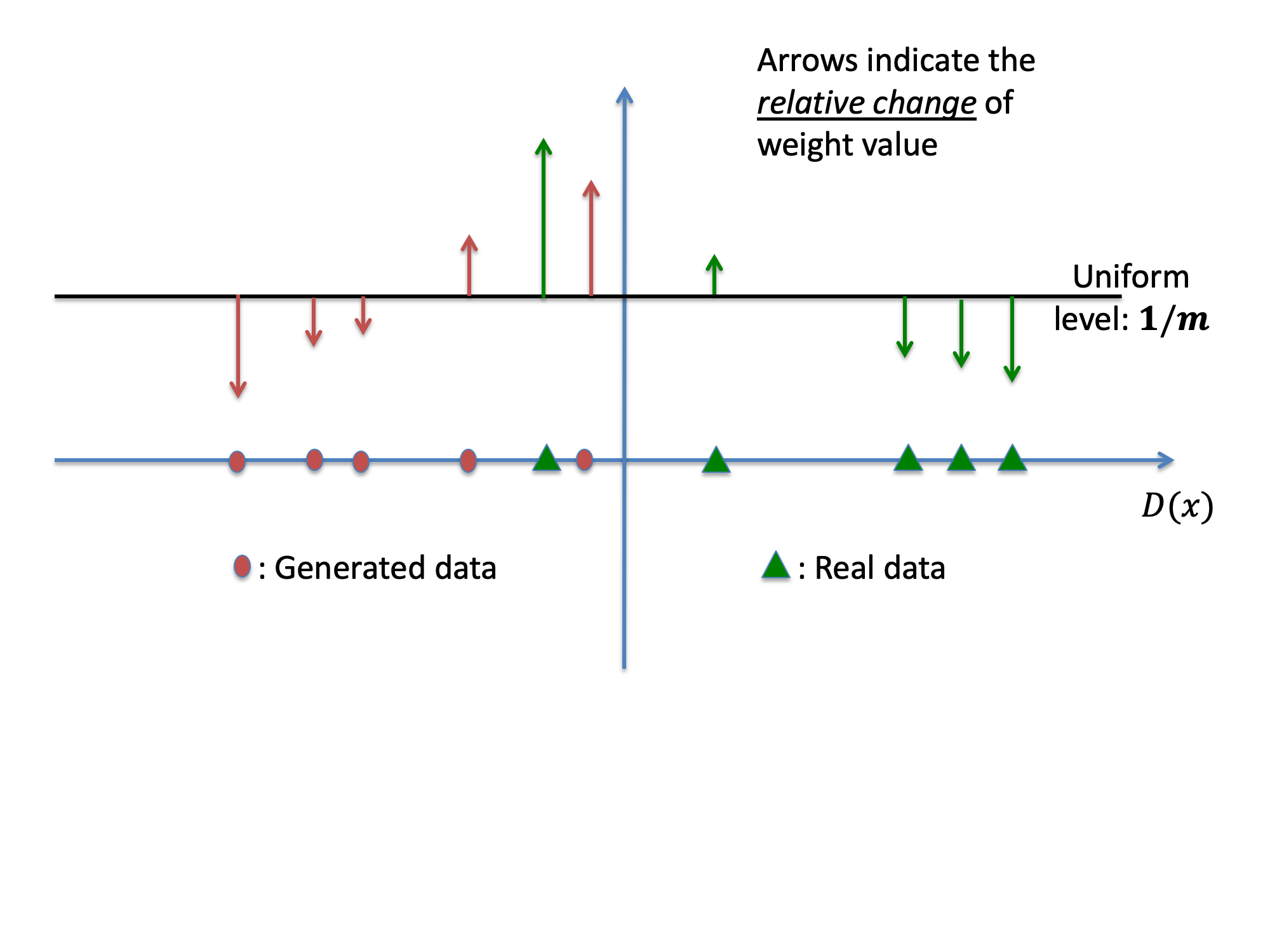}}
\vspace{-15mm}
\caption{Interpretation of \emph{cumulant GAN} as a weighted variation of SGDA for $\beta,\gamma>0$. Both real and generated samples for which the discriminator outputs a value closer to the decision boundary are assigned with larger weights because these are the samples which most probably confuse the discriminator.}
\label{cumgan:weights:fig}
\end{center}
\end{figure}

\begin{remark}
The Monte Carlo approximation in (\ref{cumgan:loss:stat:eq}) is biased. However, it has been shown that it is consistent \cite{li2016renyi}, hence, the error due to the statistical approximation decreases as the size of minibatch increases. Bias correction gradients using moving averages have been utilized in \cite{belghazi2018mutual} for the estimation of CGF. However, the modification of the loss function and the lack of an interpretation analogous to the weights  $w_i^\beta, w_i^\gamma$  are two key reasons to avoid inserting any bias-correction mechanism.
\end{remark}

\subsection{Convergence Guarantees for Linear Discriminator}
Let $\mathcal D$ be the set of all linear functions (i.e., $D(x)=\eta^T x$ with $\eta, x\in\mathbb R^d$) and assume that the real data follow a Gaussian distribution with mean value $\mu\in\mathbb R^d$ and covariance matrix, $I_d$. The generator is defined by $G(z) = z+\theta$, where $z$ is a standard $d$-dimensional Gaussian. The loss function for WGAN is\footnote{We did not add the gradient penalty in the current formulation since it has been shown that the convergence behavior of the gradient descent/ascend algorithm is not affected  \cite[Appendix B]{daskalakis2018training}.}
\begin{equation}
\min_\theta \max_\eta \ \eta^T(\mu-\theta), 
\label{wgan:loss:linear}
\end{equation}
while the respective exact cumulant loss function from (\ref{cumgan:loss:eq}) is given by \cite{Holmquist:1988}
\begin{equation}\small
\begin{aligned}
&\min_\theta \max_\eta \ \frac{1}{-\beta}(-\beta\eta^T\mu +\frac{1}{2}\beta^2\eta^T\eta) - \frac{1}{\gamma}(\gamma\eta^T\theta +\frac{1}{2}\gamma^2\eta^T\eta) \\
&\equiv \min_\theta \max_\eta \ \eta^T(\mu-\theta) -\frac{\beta+\gamma}{2}\eta^T\eta.
\label{cumgan:loss:linear}
\end{aligned}
\end{equation}
It has been proven that the training dynamics oscillates without converging to the optimum for the WGAN loss function (\ref{wgan:loss:linear}) if gradient descent ascent (GDA) 
is used \cite{mertikopoulos2018cycles} and more sophisticated algorithms such as training with optimism \cite{daskalakis2018training} or two-step extra-gradient approaches \cite{abs-1901-08511} are required to guarantee convergence. 
The use of CGFs transforms the optimization problem from just concave to a strongly concave problem for $\eta$. Actually, the cumulant loss function (\ref{cumgan:loss:linear}) is $\frac{\beta+\gamma}{2}$-strongly concave.
When the loss is both strongly convex and strongly concave, the GDA algorithm converges linearly (i.e., exponentially fast)  to the optimal solution under efficient proximal mappings admission \cite{siamjo/ChenR97}. 
Our case, where the loss is not strongly-convex with respect to $\theta$ but it is strongly-concave for $\eta$, has also linear convergence when the coupling term between $\eta$ and $\theta$ is full-column rank and the learning rates are properly chosen as it has been shown in \cite{DuHu:2019}. 
The following theorem demonstrates that the training dynamics for the cumulant loss function (\ref{cumgan:loss:linear}) converges even when the GDA algorithm uses the same learning rate for both players which is the main difference between  our result and  \cite{DuHu:2019}.
Next, without loss of generality we assume $\gamma=0$.
 
\begin{theorem}
The GDA method with learning rate $\lambda$ converges exponentially fast to the (unique) Nash equilibrium with rate  $1-(1-\epsilon)\lambda\beta$ if $\beta\in(\lambda/\epsilon, 1)$ with $\lambda<\epsilon<1$. Mathematically,  for the $t$-th iteration of GDA we have
\begin{equation}
||(\theta_t,\eta_t)-(\mu,0)||_2^2 \leq c (1-(1-\epsilon)\lambda\beta)^t,
\label{conv:rate:eq}
\end{equation}
where $(\theta^*,\eta^*)=(\mu,0)$ is the Nash equilibrium while $c$ is a computable positive constant.
\end{theorem}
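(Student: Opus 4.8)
The plan is to convert the gradient dynamics into a homogeneous linear recursion in the error variables and then analyze the spectrum of the iteration matrix. First I would specialize the exact cumulant loss written just above the theorem to $\gamma=0$, i.e. $L(\theta,\eta)=\eta^{T}(\mu-\theta)-\tfrac{\beta}{2}\eta^{T}\eta$, and compute $\nabla_{\eta}L=\mu-\theta-\beta\eta$ and $\nabla_{\theta}L=-\eta$. The Nash equilibrium is read off from stationarity: strong concavity in $\eta$ gives the unique best response $\eta=(\mu-\theta)/\beta$, and then $\nabla_{\theta}L=0$ forces $\eta=0$ and $\theta=\mu$; hence $(\theta^{*},\eta^{*})=(\mu,0)$ and it is the unique equilibrium. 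Plugging the descent--ascent step ($\theta\leftarrow\theta-\lambda\nabla_{\theta}L$, $\eta\leftarrow\eta+\lambda\nabla_{\eta}L$, updated alternately) into the error variables $u_{t}=\theta_{t}-\mu$, $v_{t}=\eta_{t}$ yields $x_{t+1}=Mx_{t}$ with $x_{t}=(u_{t},v_{t})$; the $\mu$-terms cancel, so the recursion is homogeneous, and since the data covariance is $I_{d}$ the matrix $M$ acts identically on each coordinate, reducing everything to a single $2\times2$ system.

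The crux is the spectrum of $M$. A direct computation gives $\det M=1-\lambda\beta$ — this is precisely where the hypothesis $\beta\in(0,\lambda^{-1})$ enters, being equivalent to $\det M\in(0,1)$ — and $\operatorname{tr}M=2-\lambda^{2}-\lambda\beta$, which is close to $2$ for small $\lambda$. Consequently, in the small-step regime the eigenvalues of $M$ form a complex-conjugate pair of modulus $\sqrt{\det M}=\sqrt{1-\lambda\beta}$, so the spectral radius is $\rho(M)=\sqrt{1-\lambda\beta}<1$. (By contrast, the analogous WGAN matrix has $\det M=1$, eigenvalues on the unit circle, hence the well-known non-convergence; the cumulant correction shrinks the determinant below $1$.) To finish I would either diagonalize $M=P\operatorname{diag}(\nu,\bar\nu)P^{-1}$ explicitly and bound $\|x_{t}\|_{2}^{2}=\|M^{t}x_{0}\|_{2}^{2}\le\|P\|_{2}^{2}\|P^{-1}\|_{2}^{2}\,|\nu|^{2t}\|x_{0}\|_{2}^{2}$, or — cleaner — solve the discrete Lyapunov inequality $M^{T}QM\preceq(1-\lambda\beta)Q$ for some $Q\succ0$ (solvable precisely because $\rho(M)^{2}=1-\lambda\beta$ with semisimple eigenvalues), set $V_{t}=x_{t}^{T}Qx_{t}$, obtain $V_{t+1}\le(1-\lambda\beta)V_{t}$, and conclude $\|x_{t}\|_{2}^{2}\le\frac{\lambda_{\max}(Q)}{\lambda_{\min}(Q)}\|x_{0}\|_{2}^{2}(1-\lambda\beta)^{t}$. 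Summing over the $d$ decoupled blocks gives (\ref{conv:rate:eq}) with a computable $c$ depending on $\lambda$, $\beta$ and the initialization. The general $\gamma\neq0$ case is identical after replacing $\beta$ by $\beta+\gamma$, consistent with the ``without loss of generality'' reduction.

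The main obstacle is making the claim $\rho(M)^{2}=1-\lambda\beta$ fully rigorous. The determinant identity $\det M=1-\lambda\beta$ is robust, but $\rho(M)^{2}$ equals the determinant only while the eigenvalues are complex, i.e. while the step size is small relative to $\beta$ (concretely $\lambda+\beta<2$); outside this under-damped window $M$ has two distinct positive real eigenvalues whose product is $1-\lambda\beta$, so the larger one exceeds $\sqrt{1-\lambda\beta}$ and the genuine rate degrades. The careful version therefore either restricts to the under-damped regime or states the rate as $\max(|\nu_{1}|^{2},|\nu_{2}|^{2})$, which reduces to $1-\lambda\beta$ in the regime of interest. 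A minor secondary point is the zero-discriminant borderline, where a repeated eigenvalue introduces a polynomial-in-$t$ prefactor that must be absorbed by enlarging $c$ and relaxing the base from $1-\lambda\beta$ to any slightly larger constant below $1$.
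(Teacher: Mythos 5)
Your route --- rewriting the gradient iteration as a homogeneous linear recursion $x_{t+1}=Mx_t$ in the error variables and extracting the rate from the spectrum of $M$, or equivalently from a discrete Lyapunov inequality $M^TQM\preceq(1-\lambda\beta)Q$ --- is genuinely different from the paper's proof, which never touches the spectrum: it hand-picks the explicit quadratic energy $E(\eta,\theta)=\eta^T\eta-\beta\eta^T(\mu-\theta)+(\mu-\theta)^T(\mu-\theta)$ for $0<\beta\le1$, proves a per-step contraction of $E$, sandwiches $\|(\theta,\eta)-(\mu,0)\|_2^2\le 2E(\eta,\theta)$, and then repeats the argument with the modified energy $\bar E$ (with $\beta^{-1}$ in place of $\beta$) for $1\le\beta<\lambda^{-1}$. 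Since those energies are precisely quadratic Lyapunov functions $x^TQx$, your Lyapunov-inequality step is a systematic way of producing what the paper writes down by hand, and that is a legitimate and arguably cleaner organization of the argument.

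Two concrete gaps remain, however. First, your matrix data correspond to the alternating update in which the generator uses the freshly updated discriminator: that is what gives $\det M=1-\lambda\beta$, $\operatorname{tr}M=2-\lambda^2-\lambda\beta$ and the window $\lambda+\beta<2$. The recursion the paper analyzes is the simultaneous one, $\eta_{t+1}=\eta_t+\lambda(\mu-\theta_t-\beta\eta_t)$, $\theta_{t+1}=\theta_t+\lambda\eta_t$, for which $\det M=1-\lambda\beta+\lambda^2$ and $\operatorname{tr}M=2-\lambda\beta$, the complex-eigenvalue condition is $\beta<2$, and your own method then delivers the per-step factor $1-\lambda\beta+\lambda^2$ rather than $1-\lambda\beta$; you must therefore say explicitly which dynamics you treat, and if it is the paper's scheme you have to reconcile the extra $\lambda^2$ in the base with the rate claimed in the statement. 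Second, as you yourself concede, the identity $\rho(M)^2=1-\lambda\beta$ is available only in the under-damped window; for larger $\beta$ the eigenvalues are real with product $1-\lambda\beta$, so the larger one squared strictly exceeds $1-\lambda\beta$ and your bound degrades. Consequently the proposal, as written, proves the claimed rate only on part of the range $\beta\in(0,\lambda^{-1})$ asserted by the theorem, whereas the paper's proof covers $1\le\beta<\lambda^{-1}$ by switching to the second energy $\bar E$. To match the statement you would need a corresponding second Lyapunov matrix $Q$ for the over-damped regime, or else an explicit restriction/restatement of the rate; without one of these, what you have is a weaker result than the theorem as stated.
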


\begin{proof}
The update step of GDA for the cumulant loss is given by
{\begin{equation*}
\begin{aligned}
\eta_{t+1} &= \eta_t + \lambda(\mu-\theta_t-\beta\eta_t) \ , \\
\theta_{t+1} &= \theta_t + \lambda\eta_t \ .
\end{aligned}
\end{equation*}}

\noindent
Define the energy function
{$$
E(\eta,\theta) = \eta^T\eta - \beta\eta^T(\mu-\theta) + (\mu-\theta)^T(\mu-\theta).
$$}\noindent
$E(\eta,\theta)$ is a second order polynomial for $\eta$; it is straightforward to show that if $0<\beta< 1$ then $E(\eta,\theta)\ge 0$ for all $\eta$ and $\theta$ and it is equal to 0 iff $\eta=\eta^*=0$ and $\theta=\theta^*=\mu$. Additionally, it generally holds that
{$$
||(\theta,\eta)-(\mu,0)||_2^2 \leq 2 E(\eta,\theta),
$$}\noindent
since $2 E(\eta,\theta)-||(\theta,\eta)-(\mu,0)||_2^2=\eta^T\eta - 2\beta\eta^T(\mu-\theta) + (\mu-\theta)^T(\mu-\theta)\geq 0$ for all $0<\beta< 1$.

Next, we show that $E(\eta_t,\theta_t)$ converges exponentially fast to $0$. Since, $E(\eta,\theta) = \sum_{i=1}^d \eta_i^2 - \beta\eta_i(\mu_i-\theta_i) + (\mu_i-\theta_i)^2$,  we can proceed with $d=1$ without sacrificing the generality of the proof. Using symbolic calculations, we obtain
\begin{equation*}
\begin{aligned}
E(\eta_{t+1},\theta_{t+1}) &= (1-(1-\epsilon)\lambda\beta) E(\eta_{t},\theta_{t}) \\
&- \lambda(\epsilon\beta-\lambda)[\eta_t^2 - \beta\eta_t(\mu-\theta_t) + (\mu-\theta_t)^2] \\
&\leq (1-(1-\epsilon)\lambda\beta) E(\eta_{t},\theta_{t}),
\end{aligned}
\end{equation*}
\noindent
since $\eta_t^2 - \beta\eta_t(\mu-\theta_t) + (\mu-\theta_t)^2 \geq 0$ for $\beta< 1$ and $\beta>\lambda/\epsilon$. The iterative application of this inequality yields
{\begin{equation*}
E(\eta_{t+1},\theta_{t+1}) \leq (1-(1-\epsilon)\lambda\beta)^{t+1} E(\eta_{0},\theta_{0}).
\end{equation*}}
\noindent
Combining the above inequalities we prove (\ref{conv:rate:eq}) with $c=2 E(\eta_{0},\theta_{0})$.
\end{proof}

Finally, we remark that similar to previous studies that prove linear convergence \cite{siamjo/ChenR97, DuHu:2019}, our proof utilizes the concept of energy functions (a.k.a. Lyapunov functionals), a tool from the theory of Dynamical Systems that has the potential to be transferable to more general optimization problems, too.

\begin{figure*}[ht]
\begin{center}
\centerline{\includegraphics[width=.95\columnwidth]{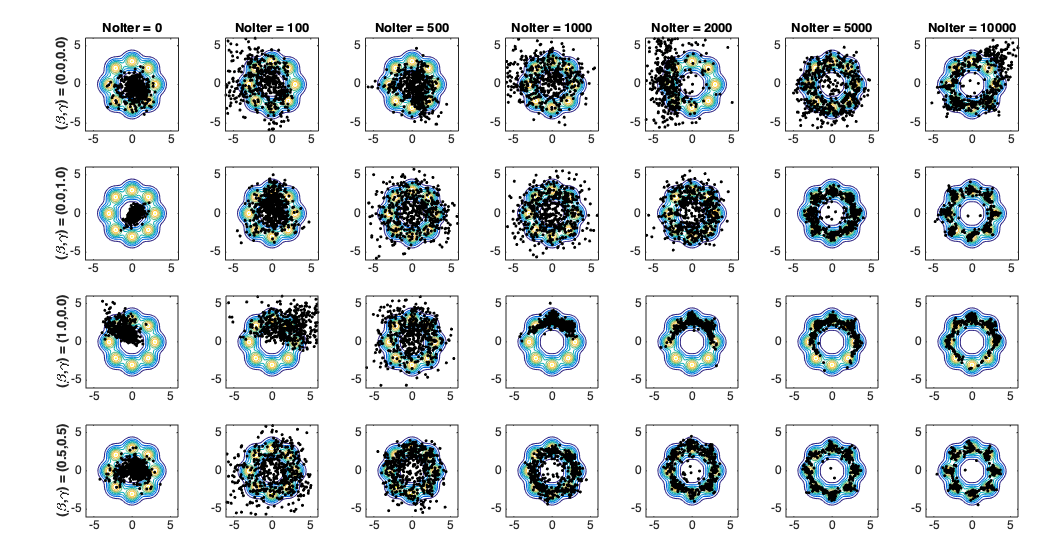}}
\caption{Generated samples using the Wasserstein distance using clipping (1st row), KL divergence (2nd row), reverse KLD (3rd row) and Hellinger distance (last row). The boundedness condition is not enforced on this example but it is necessary to be satisfied when the hyper-parameters take negative values.}
\label{kld:vs:rkld:fig}
\end{center}
\end{figure*}

\section{Demonstrations}
\label{demo:sec}

\subsection{Traversing the $(\beta,\gamma)$-plane: from  Mode Covering to Mode Selection}
As demonstrated in Section III.B and Fig.~\ref{beta:gamma:plane:fig}, the optimization of cumulant GAN for the set of bounded and measurable functions and various hyper-parameter values is equivalent to the minimization of a divergence. It is well-known that different divergences result in fundamentally different behavior of the solution. For instance, KLD minimization tends to produce a distribution that covers all the modes while the reverse KLD tends to produce a distribution that is focused on a subset of the modes \cite{minka2005divergence,bishop2006pattern,hernandez2016black}. Taking the extreme cases, an all-mode covering is obtained as $\beta\to-\infty$ while largest mode selection is observed at the other limit direction.

Our first example aims at highlighting the above characteristics of divergences and additionally to verify that the sub-optimal approximation of the function space of all bounded functions by a family of neural networks does not significantly affect the expected outcomes. Fig.~\ref{kld:vs:rkld:fig} presents generated samples for various values of the $(\beta,\gamma)$ pair at different stages of the training process as quantified by the number of iterations (denoted by `NoIter' in the Figure). The target distribution is a mixture of 8 equiprobable and equidistant-from-the-origin Gaussian random variables. Both discriminator and generator are neural networks with 2 hidden layers with 32 units each and ReLU as activation function. Input noise for the generator is an 8-dimensional standard Gaussian. In all cases, the discriminator is updated $k=5$ times followed by an update for the generator.

KLD minimization that corresponds to $(\beta,\gamma)=(0,1)$  (second row) tends to cover all modes while reverse KLD that corresponds to $(\beta,\gamma)=(1,0)$  (third row) tends to select a subset of them. This is particularly evident when the number of iterations is between 500 and 2000. Hellinger distance minimization (last row) produces samples with statistics that  lie between KLD and reverse KLD minimization while Wasserstein distance minimization (first row) has a less controlled behavior. It is also noteworthy that reverse KLD was not able to fully cover all the modes after 10K iterations. This behavior is not necessarily a drawback since the divergence of choice  is primarily an application-specific decision. For instance, the lack of diversity might be sacrificed in image generation for the sake of sharpness of the synthetic images.

We note that despite demonstrating a single run, the plots in Fig.~\ref{kld:vs:rkld:fig} are not cherry-picked. We have tested several architectures with more or fewer layers, as well as more or fewer  units per layer, repeating each run several times, with qualitatively similar results which are presented in Appendix E.A (Figs. 4--6) of Supplementary Materials. We further tested and compared the performance of various hyper-parameter values of cumulant GAN on two additional data distributions and presented them in Appendix E.B (Figs. 7--12). The first dataset is a mixture of 6 equiprobable Student's t distributions while the second dataset is the Swiss-roll distribution. Overall, cumulant GAN with $(\beta,\gamma)=(0.5,0.5)$ (i.e., Hellinger distance\footnote{Actually, we minimize $-4\log(1-Hel^2)$, see Corollary 1.}) generated the most accurate results for all datasets and across  various architectures. Finally, we experimented with d-rays and showed in Appendix E.C (Figs. 13--15) that the training process is qualitatively similar in terms of `mode covering' versus `mode selection' across the divergence rays.

\subsection{Learning the Covariance Matrix of a Multivariate Gaussian}
A CGF can uniquely determine a distribution and contains information on  all moments. Therefore, the use of simple discriminators which may fail under the WGAN loss might be sufficient under the cumulant loss in order to successfully train the generator. In this section, we provide an explicit example where the discriminator despite being a linear function the target is to learn the second order statistic of a multivariate Gaussian distribution. Thus, the real data, $x\in\mathbb R^d$, follow a zero-mean Gaussian with covariance matrix $\Sigma$, the discriminator is given by $D(x)=\eta^T x$ while the generator is given by $G(z) = Az$ where $A$ is a $d\times k$ matrix and $z$ is a standard $k$-dimensional Gaussian. The aim is to obtain a solution, $\hat{\Sigma} = \hat{A}\hat{A}^T$, close to the true covariance matrix.

The loss function of WGAN is $L(0,0) = \eta^T \mathbb E_{p_r}[x] - \eta^T A \mathbb E_{p_z}[z] = 0$, therefore it is impossible here to learn the covariance matrix. On the other hand, the cumulant loss reads
\begin{equation}
L(\beta,\gamma) = -\frac{1}{2}\eta^T (\beta\Sigma + \gamma AA^T) \eta
\end{equation}
\noindent
allowing the possibility of a $(\beta,\gamma)$ pair that makes the Nash equilibrium non-trivially informative regarding the covariance matrix. Indeed, we calculated the best response diagrams for $d=1$ with fixed positive values of $\gamma$
and inferred that suitable values  are  $\beta\ll-1$. Fig.~\ref{cov:mat:estim:fig} presents the average error of the covariance matrix evaluated using the Frobenius norm as a function of $\beta$. The covariance is computed using either the above exact loss function (upper plot) or the statistical approximation of the cumulant loss along with  SGDA algorithm (lower plot) for three values of $\gamma$. We use $10K$ samples for the latter case, average over 10 iterations and a different covariance matrix is used at each iteration. The true covariance matrix is rescaled so that its Frobenius norm equals to 1. We observe that the covariance matrix is learned satisfactorily when the exact loss function is used for large negative values of $\beta$. When the approximated, yet realistic, loss is used, the error between the true and the estimated covariance matrices increases after a certain value of $-\beta$ because tail statistics (requiring a large amount of samples) start to take control. Overall, the direct conclusion is that cumulant GAN is able to learn higher-order statistics and produce samples with the correct covariance structure despite the fact that a very simple discriminator without any access to higher-order statistics was deployed.

\begin{figure}[t!]
\begin{center}
\centerline{\includegraphics[width=0.6\columnwidth]{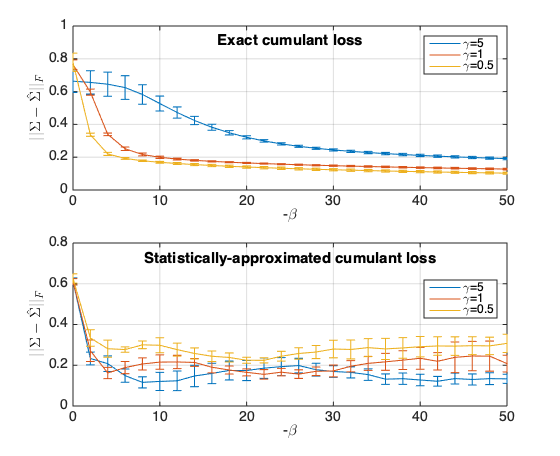}}
\caption{Covariance estimation error for the exact cumulant loss function (upper plot) and for the statistically-approximated cumulant loss function (lower plot).}
\label{cov:mat:estim:fig}
\end{center}
\end{figure}

\subsection{Improved Image Generation}
A series of experiments have been conducted on CIFAR-10 \cite{krizhevsky2009learning} and ImageNet \cite{imagenet_cvpr09} datasets demonstrating the effectiveness of cumulant GAN. In the experiments, we select pairs of $(\beta, \gamma)$ that correspond to well-known divergences in order to highlight their effect on the training process as well as to facilitate connections with existing literature.

\begin{table*}[]\footnotesize
\caption{Mean inception score on CIFAR-10 and ImageNet.}
\hspace{-20mm}
\begin{tabular}{c|cccc||clcl}
& \multicolumn{4}{c||}{CIFAR-10} & \multicolumn{4}{c}{ImageNet} \\ \hline
\backslashbox[]{Loss function}{Architecture} &
\begin{tabular}[c]{@{}c@{}}Conv layers\\ Gen: 3 \& Dis: 1\end{tabular} & 
\begin{tabular}[c]{@{}c@{}}Residual blocks\\ Gen: 4 \& Dis: 2\end{tabular} &
\begin{tabular}[c]{@{}c@{}}Residual blocks (V1)\\ Gen: 4 \& Dis: 3\end{tabular} &
\begin{tabular}[c]{@{}c@{}}Residual blocks (V2)\\ Gen: 4 \& Dis: 3\end{tabular} &
\multicolumn{2}{c}{\begin{tabular}[c]{@{}c@{}}Residual blocks\\ Gen: 4 \& Dis: 2\end{tabular}} &
\multicolumn{2}{c}{\begin{tabular}[c]{@{}c@{}}Residual blocks\\ Gen: 4 \& Dis: 4\end{tabular}} \\ \hline
Wasserstein & $4.36~\pm~0.10$ & $4.58~\pm~0.14$ & $5.25~\pm~0.23$ & $6.45~\pm~0.34$ & \multicolumn{2}{c}{$5.13~\pm~0.45$} & \multicolumn{2}{c}{$8.88~\pm~1.15$} \\
KLD & $\textbf{4.81}~\pm~\textbf{0.07}$ & $7.63~\pm~0.07$ & $\textbf{7.42}~\pm~\textbf{0.08}$ & $7.28~\pm~0.12$ & \multicolumn{2}{c}{$\textbf{8.86}~\pm~\textbf{0.08}$} & \multicolumn{2}{c}{$10.03~\pm~0.12$} \\
Reverse KLD & $4.56~\pm~0.13$ & $\textbf{7.68}~\pm~\textbf{0.08}$ & 7.28~$\pm$~0.05 & $\textbf{7.39}~\pm~\textbf{0.08}$ & \multicolumn{2}{c}{8.70$~\pm~$0.33} & \multicolumn{2}{c}{$\textbf{10.23}~\pm~\textbf{0.13}$} \\
Hellinger & \textbf{4.82}~$\pm$~\textbf{0.10} & \textbf{7.69}~$\pm$~\textbf{0.06} & $7.22~\pm~0.08$ & $\textbf{7.35} \pm \textbf{0.09}$ & \multicolumn{2}{c}{$8.55~\pm~0.23$} & \multicolumn{2}{c}{\textbf{10.24}~$\pm$~\textbf{0.58}} \\ \hline
\end{tabular}
\label{IS:table}
\end{table*}

\begin{table*}[]\footnotesize
\caption{Mean FID on CIFAR-10 and ImageNet.}
\hspace{-20mm}
\begin{tabular}{c|cccc||clcl}
& \multicolumn{4}{c||}{CIFAR-10} & \multicolumn{4}{c}{ImageNet} \\ \hline
\backslashbox[]{Loss function}{Architecture} &
\begin{tabular}[c]{@{}c@{}}Conv layers\\ Gen: 3 \& Dis: 1\end{tabular} & 
\begin{tabular}[c]{@{}c@{}}Residual blocks\\ Gen: 4 \& Dis: 2\end{tabular} &
\begin{tabular}[c]{@{}c@{}}Residual blocks (V1)\\ Gen: 4 \& Dis: 3\end{tabular} &
\begin{tabular}[c]{@{}c@{}}Residual blocks (V2)\\ Gen: 4 \& Dis: 3\end{tabular} &
\multicolumn{2}{c}{\begin{tabular}[c]{@{}c@{}}Residual blocks\\ Gen: 4 \& Dis: 2\end{tabular}} &
\multicolumn{2}{c}{\begin{tabular}[c]{@{}c@{}}Residual blocks\\ Gen: 4 \& Dis: 4\end{tabular}} \\ \hline
Wasserstein & $173.66~\pm~3.42$ & $76.54~\pm~4.04$ & $71.77~\pm~3.63$ & $39.58~\pm~13.85$ & \multicolumn{2}{c}{$137.78~\pm~11.20$} & \multicolumn{2}{c}{$64.26~\pm~16.06$} \\
KLD & $\textbf{157.36}~\pm~\textbf{2.02}$ & $19.81~\pm~0.55$ & $\textbf{22.51}~\pm~\textbf{1.75}$ & $23.06~\pm~1.20$ & \multicolumn{2}{c}{$\textbf{67.45}~\pm~\textbf{1.74}$} & \multicolumn{2}{c}{$49.39~\pm~0.75$} \\
Reverse KLD & $\textbf{156.23}~\pm~\textbf{6.51}$ & $\textbf{18.74}~\pm~\textbf{0.58}$ & $24.62~\pm~0.90$ & $\textbf{21.54}~\pm~\textbf{1.36}$ & \multicolumn{2}{c}{$72.96~\pm~5.57$} & \multicolumn{2}{c}{$\textbf{45.91}~\pm~\textbf{1.40}$} \\
Hellinger & $\textbf{158.60}~\pm~\textbf{2.96}$ & $\textbf{18.66}~\pm~\textbf{0.54}$ & $24.59~\pm~0.63$ & $\textbf{21.15}~\pm~\textbf{1.25}$ & \multicolumn{2}{c}{$69.88~\pm~1.98$} & \multicolumn{2}{c}{$\textbf{45.80}~\pm~\textbf{3.77}$} \\ \hline
\end{tabular}
\label{FID:table}
\end{table*}

\subsubsection{CIFAR-10 Dataset}
CIFAR-10 is a well-studied dataset of 32$\times$32$\times$3 RGB color images with 10 classes. We evaluate the quality of the generated images using four different architectures: one with convolutional layers (CNN) and three with residual blocks (resnet). The generator for the CNN consists of one linear layer followed by three convolutional layers while the discriminator is a single convolutional layer followed by one linear layer. The generator for the three resnets consists of four residual blocks while the discriminator consists of two or three residual blocks. We train two versions with three residual blocks for the discriminator but with different channel dimension and learning rate. The complete description of the architectures can be found in Appendix F of Supplementary Materials. In all cases, we deliberately choose a weaker discriminator to challenge the training procedure.

Tables \ref{IS:table} and \ref{FID:table} report the averaged inception score (IS) \cite{salimans2016improved} and the averaged Fr\'echet inception distance (FID) \cite{heusel2017gans} along with their standard deviation over five runs for the four architectures. We test four different hyper-parameter values that correspond to minimization of Wasserstein distance, KLD, reverse KLD and Hellinger distance (actually, $-4\log(1-Hel^2)$). 
We use the two-sided gradient penalty for WGAN since it has been shown to provide better performance than the one-sided version \cite{gulrajani2017improved}. However, the two-sided gradient penalty is not valid for cumulant GAN \cite{birrell2021fgammadivergences} therefore we enforce the one-sided version of the gradient penalty. In all cases, the optimization for the discriminator is realized over Lipschitz continuous functions.
The implementation of cumulant GAN is based on available open-source code\footnote{https://github.com/igul222/improved\_wgan\_training}. Following the reference code, we train the models with the Adam optimizer and the discriminator's parameters are updated $k=5$ times more often than the parameters of the generator.

We remind that IS is a standard metric to evaluate the visual quality of generated image samples \cite{salimans2016improved}. IS correlates with human judgment by feeding generated samples into a pre-trained Inception v3 classifier. Images with naturally-looking objects are supposed to have low label (output) entropy whereas the entropy across images should be high. On the other hand, FID score uses the Inception v3 model activation layers (last pooling layer) to capture latent features calculated for a collection of real and generated images. The activation values are summarized as a multivariate Gaussian by calculating the mean and covariance of both real and generated images. The distance between these two distributions is then calculated using the Fr\'echet distance, also called the 2-Wasserstein distance. We use $50K$ images to compute IS/FID scores. Higher IS means better generated image quality whereas the best generative model returns the lowest FID score.

We observe from the tables as well as from the panels of Fig.~\ref{cifar:fig} which present the averaged IS as a function of the number of iterations for the four architectures that all hyper-parameter choices for cumulant GAN outperform the baseline WGAN. The relative improvement ranged from 4.6\% (reverse KLD) up to 10.5\% (Hellinger distance) for the CNN architecture while the relative improvement for the resnet with the weaker discriminator ranged from 66.6\% (reverse KLD) up to 67.9\% (Hellinger distance) revealing that cumulant GAN takes into consideration all discriminator's moments, i.e., all higher-order statistics and not just the mean values, leading to better realization of the target distribution. Cumulant GAN achieves higher ISs than WGAN for the two versions of resnets with three residual blocks for the discriminator (lower panels in Fig. \ref{cifar:fig}), too. As expected, cumulant GANs also perform better on FID metric with relative improvements up to 75.62\% (Hellinger distance) for the resnet with the weaker discriminator and 68.64\% (KLD) and 46.56\% (Hellinger distance) for the two versions of resnets, respectively. All cumulant GAN variations (KLD, reverse KLD and Hellinger) obtain similar results for both versions while the performance of WGAN is significantly affected by the choice of the hyper-parameter values, e.g., learning rate and channel dimension. This discrepancy in the performance highlights the enhanced robustness of cumulant GAN relative to WGAN implying that cumulant GAN may require less tuning in order to enjoy excellent performance. Finally, the samples generated by cumulant GAN also exhibit larger diversity and are visually better (we refer to Appendix G in Supplementary Materials).

\begin{figure}[t!]
\begin{center}
\centerline{\includegraphics[width=.85\columnwidth]{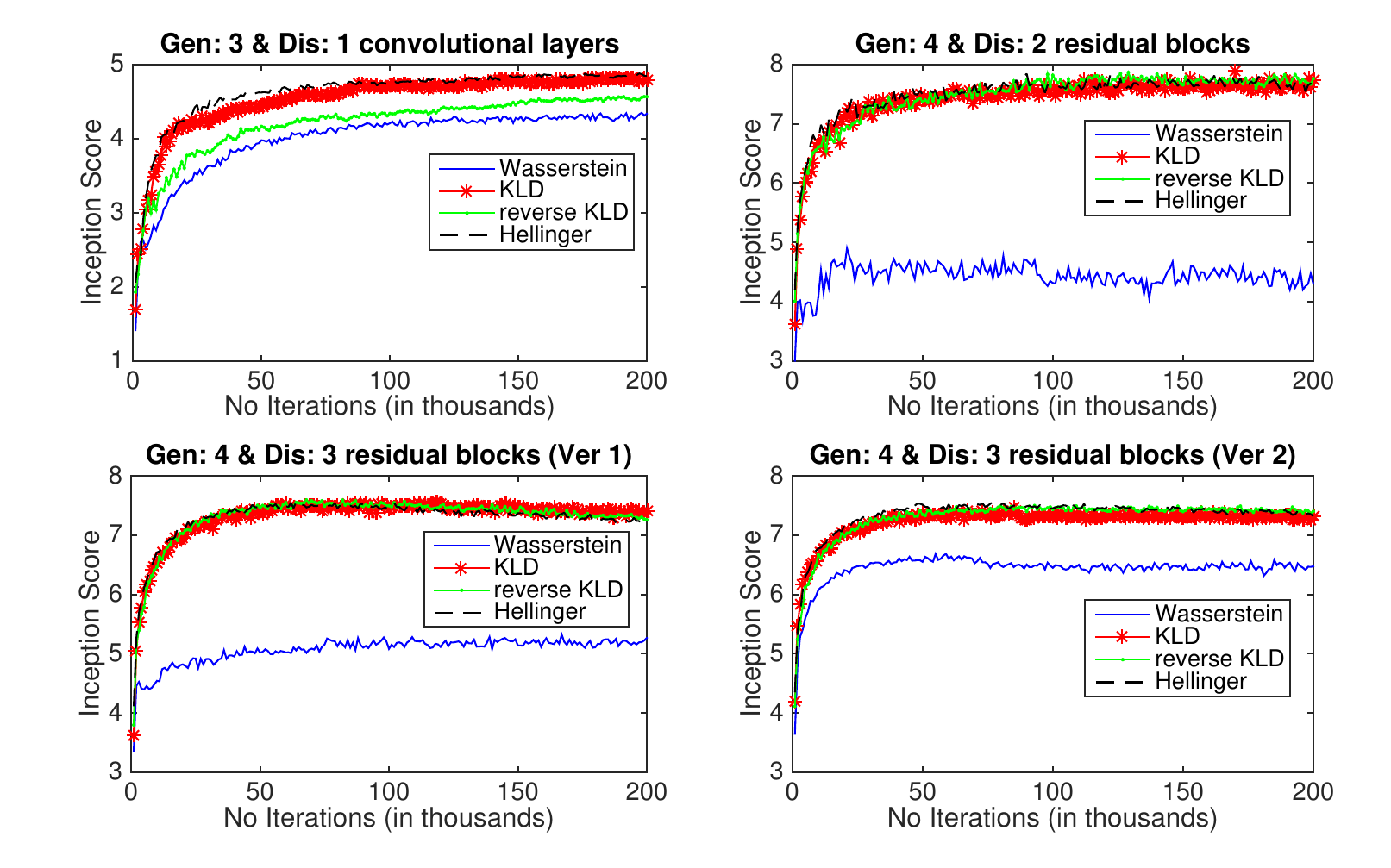}}
\caption{IS for CIFAR-10 using various hyper-parameters of cumulant GAN and various architectures. In all cases, WGAN has a lower IS relative to the cumulant GAN with the hyper-parameter corresponding to Hellinger minimization achieving the best overall performance.} 
\label{cifar:fig}
\end{center}
\end{figure}

\subsubsection{ImageNet Dataset}
This large dataset consists of 64$\times$64$\times$3 color images with 1000 object classes. The large number of classes is challenging for GAN training due to the tendency to underestimate the entropy in the distribution \cite{salimans2016improved}. We evaluate the performance on two different architectures which both have a generator with four residual blocks. The difference is in the number of residual blocks for the discriminator where we employ a weak discriminator with two residual blocks and a strong discriminator with four residual blocks.
Fig.~\ref{imagenet:fig} presents the performance in terms of IS both for the baseline WGAN and for the variants of cumulant GAN when a weak discriminator (left panel) or a strong discriminator (right panel) is utilized. Improved inception scores are obtained with cumulant GAN for both architectures. It is also important to note that our approach is much more effective than WGAN at avoiding mode collapse while still generating high quality samples. The mean ISs along with the standard deviation over three repetitions are reported in Table \ref{IS:table} (rightmost columns). In terms of relative improvement, cumulant GAN is between 72.71\% (KLD) to 69.59\% (reverse KLD) better than WGAN for the weak discriminator and a similar trend is observed when the strong discriminator is used reaching IS as high as $10.24$. As reported in Table \ref{FID:table}, the proposed cumulant GAN is superior relative to WGAN in generating high quality images with low FID scores of 67.45 (KLD) for the weak discriminator and 45.80 (Hellinger distance) for the strong discriminator.
By visual inspection of the generated images (Appendix G in Supplementary Materials), we conclude that all generators learn some basic and contiguous shapes with natural color and texture. Nevertheless, cumulant GAN provides better images with object specifications that are clearly more realistic.  

\begin{figure}[t!]
\begin{center}
\centerline{\includegraphics[width=.85\columnwidth]{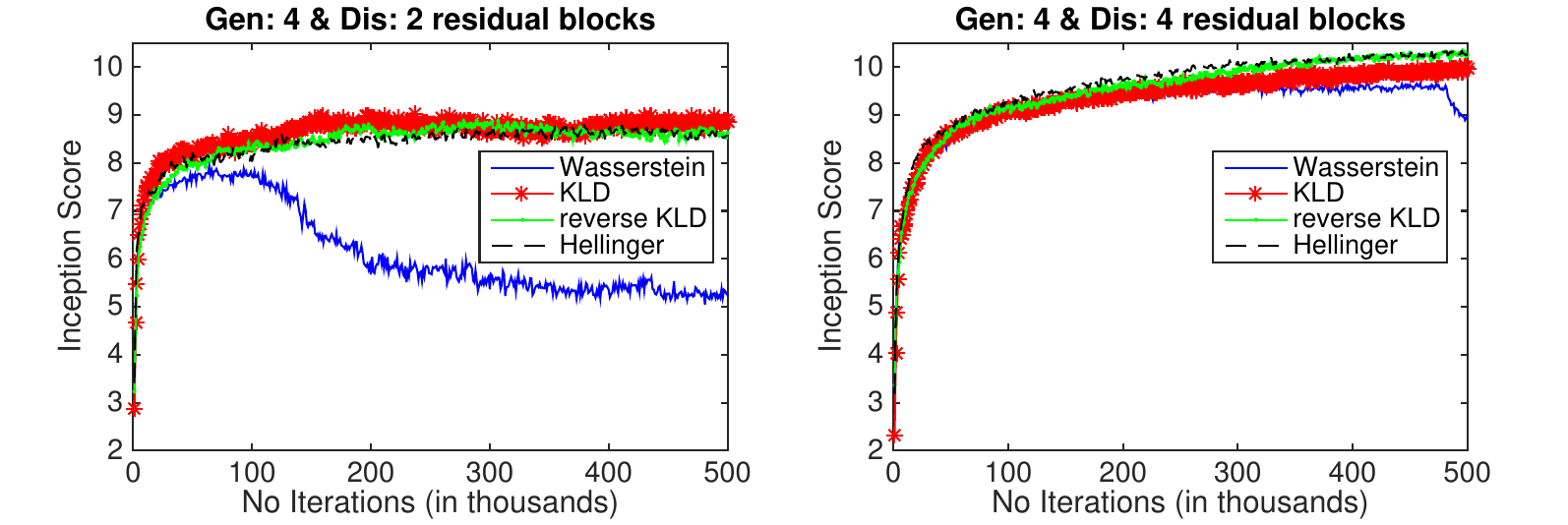}}
\caption{Same as Fig. \ref{cifar:fig} but for ImageNet. Cumulant GAN achieves higher IS relative to WGAN for both weak (left panel) and strong (right panel) discriminator. Mode collapse has been mitigated in all cumulant GAN variants.}
\label{imagenet:fig}
\end{center}
\end{figure}

Despite not being exhaustive, the presented examples demonstrated a preference of cumulant GAN over WGAN. In general, GAN optimization has essentially two critical components: the first being the function space where the discriminator lives while the other is the objective functional to be optimized. WGAN's breakthrough was the restriction of the function space to Lipschitz continuous functions that resulted in increased stability. However, there is no evidence that the best-performing loss function is the difference of two expectations as in WGAN. The presented examples revealed that there are better and more flexible options for the overall loss function and the proposed cumulant loss is one of them. 

\section{Conclusions and Future Directions}
\label{concl:sec}
We proposed cumulant GAN by establishing a novel loss function based on the CGF of both real and generated distributions. The use of CGFs allows for an inclusive   characterization of the distributions' statistics, making it possible to partially remove complexity from the discriminator. The net result is improved and more stable training of GANs. Furthermore, cumulant GAN has the capacity to smoothly interpolate between a wide range of divergences and distances by simply changing  its two hyper-parameter values $\beta$ \& $\gamma$. Thus it  offers a flexible and comprehensive mechanism to choose --possibly adaptively-- which objective to minimize. In a recent publication \cite{dip2021disentangle}, the authors applied cumulant GAN for disentangled representation learning of speech signals and our plan is to further explore the improved capabilities of cumulant GAN in a variety of estimation and inference applications.
Finally, the substitution of an expectation operator with the respective CGF does not have to be limited to WGAN. It can be applied to other GANs' loss function resulting in new GAN formulations. The theoretical and empirical ramifications of such extensions are left as future work.

\section*{Acknowledgments}
Yannis Pantazis acknowledges partial support by the project “Innovative Actions in Environmental Research and Development (PErAn)” (MIS 5002358) funded by the Operational Programme "Competitiveness, Entrepreneurship and Innovation" (NSRF 2014-2020). 
Dipjyoti Paul has received for this work funding from the EU H2020 Research and Innovation Programme under the MSCA GA 67532 (the ENRICH network: www.enrich-etn.eu).
Dr. Michail Fasoulakis is supported by the Stavros Niarchos-FORTH postdoc fellowship for the project ARCHERS. 
The research of Markos Katsoulakis was partially supported by  the National Science Foundation (NSF) under the grant DMS-2008970, by the HDR-TRIPODS program of NSF under the grant CISE-1934846 and by the Air Force Office of Scientific Research (AFOSR) under the grant FA-9550-18-1-0214.

\bibliographystyle{unsrt}  
\bibliography{bibl.bib}

\clearpage

\appendix

\section{A Variational Formula for Renyi Divergence}
Similarly to the Donsker-Varahdan variational formula for the Kullback-Leibler divergence that can be obtained from the convex duality formula, we prove a variational formula for the Renyi divergence using the variational representation of exponential integrals also known as  risk-sensitive functionals/observables.

\begin{theorem} {\bf (Variational Representation of Renyi Divergences)}
Let $p$ and $q$ be probability distributions. Then, the following formula  holds: 
\begin{equation}
\mathcal R_\alpha(p||q) = \sup_{f\in C_b} \left\{ \frac{1}{\alpha-1} \log \mathbb E_p[e^{(\alpha-1)f}] - \frac{1}{\alpha} \log \mathbb E_q[e^{\alpha f}] \right\},
\label{Renyi:var:eq}
\end{equation}
where $C_b$ is the space of all bounded and measurable functions.
\end{theorem}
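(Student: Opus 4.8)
The plan is to prove the two bounds $\sup_{f\in C_b}\Phi(f)\le\mathcal R_\alpha(p\|q)$ and $\sup_{f\in C_b}\Phi(f)\ge\mathcal R_\alpha(p\|q)$ separately, writing $\Phi(f)$ for the functional on the right of (\ref{Renyi:var:eq}). A couple of reductions come first. If $p$ and $q$ are not mutually absolutely continuous then $\mathcal R_\alpha(p\|q)=+\infty$ at the orders where it is defined, and one checks directly that the supremum is $+\infty$ as well; so assume $p\ll q\ll p$ and set $r=dp/dq$. A one-line algebraic check shows $\Phi$ is invariant under the joint substitution $(f,\alpha,p,q)\mapsto(-f,1-\alpha,q,p)$, which combined with $\mathcal R_\alpha(p\|q)=\mathcal R_{1-\alpha}(q\|p)$ and the symmetry of $C_b$ under $f\mapsto -f$ lets me assume $\alpha>0$ (a negative order maps to an order $>1$). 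It then remains to handle $\alpha\in(0,1)$ and $\alpha>1$.

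For the upper bound, fix $f\in C_b$ and use the pointwise factorization
\[
r^{\alpha}=\bigl(r\,e^{(\alpha-1)f}\bigr)^{\alpha}\bigl(e^{\alpha f}\bigr)^{1-\alpha},
\]
whose exponents are dictated by requiring the total power of $r$ to be $\alpha$ and that of $e^{f}$ to be $0$. For $\alpha\in(0,1)$, H\"older's inequality with exponents $1/\alpha$ and $1/(1-\alpha)$ (both exceeding $1$) gives $\mathbb E_q[r^{\alpha}]\le(\mathbb E_q[r\,e^{(\alpha-1)f}])^{\alpha}(\mathbb E_q[e^{\alpha f}])^{1-\alpha}$; for $\alpha>1$ one exponent is negative and the reverse H\"older inequality yields the opposite estimate. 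Using $\mathbb E_q[r\,e^{(\alpha-1)f}]=\mathbb E_p[e^{(\alpha-1)f}]$, taking logarithms, and dividing by $\alpha(\alpha-1)$ — negative for $\alpha\in(0,1)$, positive for $\alpha>1$, so the inequality is oriented correctly in each case — gives $\mathcal R_\alpha(p\|q)\ge\Phi(f)$. Boundedness of $f$ keeps every expectation here finite and strictly positive, so this step carries no integrability subtleties.

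For the lower bound I would exhibit a (near-)maximizer. The natural choice is $f^{\star}=\log r$: inserting $e^{f^{\star}}=r$ makes both exponential integrals equal to $\mathbb E_q[r^{\alpha}]$ (via $\mathbb E_p[r^{\alpha-1}]=\mathbb E_q[r^{\alpha}]$), and combining the two prefactors returns exactly $\mathcal R_\alpha(p\|q)$. Since $f^{\star}$ need not be bounded, I would instead use the truncations $f_M=((\log r)\wedge M)\vee(-M)\in C_b$ and argue $\Phi(f_M)\to\mathcal R_\alpha(p\|q)$ as $M\to\infty$: when $\mathcal R_\alpha(p\|q)<\infty$ both $\mathbb E_p[e^{(\alpha-1)f_M}]$ and $\mathbb E_q[e^{\alpha f_M}]$ converge to $\mathbb E_q[r^{\alpha}]$ by dominated convergence (convenient $M$-uniform dominators are $\max(r^{\alpha-1},1)$ under $p$ and $\max(r^{\alpha},1)$ under $q$, both integrable precisely when $\mathbb E_q[r^\alpha]<\infty$), while Fatou's lemma forces the relevant integral to diverge — hence $\Phi(f_M)\to+\infty$ — when $\mathcal R_\alpha(p\|q)=+\infty$. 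Together with the upper bound this establishes (\ref{Renyi:var:eq}).

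The hard part is this last limiting step: the truncations $f_M$ are neither bounded nor monotone in $M$ over the whole space (they saturate at $\pm M$ wherever $\log r$ falls outside $[-M,M]$), so one must examine, region by region and according to the signs of $\alpha-1$ and $\alpha$, where $e^{(\alpha-1)f_M}$ or $e^{\alpha f_M}$ over- or under-shoots $r^{\alpha-1}$ or $r^{\alpha}$, in order to justify the dominating functions above and to separate the finite from the infinite case cleanly. Everything else is routine algebra plus two applications of (reverse) H\"older; the symmetry reduction is only bookkeeping and is not a substitute for the reverse inequality when $\alpha>1$.
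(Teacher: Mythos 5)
Your argument is essentially correct and takes a genuinely different, more self-contained route than the paper's. The paper proves the bound $\mathcal R_\alpha(p\|q)\ge \frac{1}{\alpha-1}\log\mathbb E_p[e^{(\alpha-1)f}]-\frac{1}{\alpha}\log\mathbb E_q[e^{\alpha f}]$ by quoting, as a black box, the risk-sensitive variational formula of Dupuis et al.\ for exponential integrals, and it establishes attainment of the supremum only under the simplifying assumption that $\log\frac{dp}{dq}$ is itself bounded, deferring the general case to \cite{birrell2019distributional}. You instead derive the upper bound from scratch via the factorization $r^\alpha=(r\,e^{(\alpha-1)f})^\alpha(e^{\alpha f})^{1-\alpha}$ together with H\"older for $\alpha\in(0,1)$ and reverse H\"older for $\alpha>1$ (the sign bookkeeping when dividing by $\alpha(\alpha-1)$ is right in both regimes), and you handle an unbounded log-likelihood ratio by truncation at $\pm M$ with the uniform dominators $\max(r^{\alpha-1},1)$ under $p$ and $\max(r^\alpha,1)$ under $q$; both dominations are valid for $\alpha>0$ and the dominated-convergence limit indeed gives $\Phi(f_M)\to\mathcal R_\alpha(p\|q)$ when the divergence is finite. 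This buys a proof that does not rely on the cited lemma and covers unbounded $\log r$, i.e.\ it is strictly more general than the paper's own argument; the symmetry reduction to $\alpha>0$ is also correct.

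Two points need repair. First, the opening reduction is wrong as stated: failure of mutual absolute continuity does not force $\mathcal R_\alpha(p\|q)=+\infty$. For $\alpha\in(0,1)$ the divergence is finite whenever $p$ and $q$ are not mutually singular, and for $\alpha>1$ it is finite when $p\ll q$ even if $q\not\ll p$ (take $p$ uniform on $[0,\tfrac12]$, $q$ uniform on $[0,1]$). Since the paper defines $\mathcal R_\alpha$ only for mutually absolutely continuous pairs, and since your H\"older/truncation argument in fact uses only $p\ll q$, the fix is simply to restrict the claim (or to treat the singular parts separately as in \cite{birrell2019distributional}) rather than to assert both sides are infinite. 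Second, in the genuinely infinite case ($\alpha>1$, $\mathbb E_q[r^\alpha]=\infty$, $p\ll q$), ``Fatou forces the relevant integral to diverge, hence $\Phi(f_M)\to+\infty$'' is not by itself a proof: both logarithms diverge and the $q$-term enters with a negative sign, so the two divergences must be compared. The comparison is easy --- with $c_M=\min(\max(r,e^{-M}),e^{M})$ one has $r\ge c_M$ off $\{r<e^{-M}\}$, hence $\mathbb E_p[e^{(\alpha-1)f_M}]=\mathbb E_q[c_M^{\alpha-1}r]\ge \mathbb E_q[e^{\alpha f_M}]-e^{-\alpha M}$, so $\Phi(f_M)\ge \frac{1}{\alpha(\alpha-1)}\log \mathbb E_q[e^{\alpha f_M}]+o(1)\to+\infty$ --- but this estimate, not Fatou alone, is what closes the case; you correctly flag this step as the delicate one, and the above is the missing ingredient.
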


\begin{proof}
The authors in \cite{dupuis:renyi} proved that for all bounded and measurable functions $f$ we have:
$$
\frac{1}{\alpha-1} \log \mathbb E_p[e^{(\alpha-1)f}] = \inf_q \{ \frac{1}{\alpha} \log \mathbb E_q[e^{\alpha f}] + \mathcal R_\alpha(p||q) \}.
$$

Therefore, for any $q$, 
$$
\frac{1}{\alpha-1} \log \mathbb E_p[e^{(\alpha-1)f}] \le \frac{1}{\alpha} \log \mathbb E_q[e^{\alpha f}] + \mathcal R_\alpha(p||q)
$$
$$
\mathcal R_\alpha(p||q) \ge \frac{1}{\alpha-1} \log \mathbb E_p[e^{(\alpha-1)f}] - \frac{1}{\alpha} \log \mathbb E_q[e^{\alpha f}]
$$
For simplicity in the presentation, here  we provide the proof based on the assumption that the function $f=\log\frac{dp}{dq}$ is bounded and measurable. 
Based on the aforementioned assumption we have: 
\begin{equation*}
\begin{aligned}
&\frac{1}{\alpha-1} \log \mathbb E_p[e^{(\alpha-1)\log\frac{dp}{dq}}] - \frac{1}{\alpha} \log \mathbb E_q[e^{\alpha \log\frac{dp}{dq}}]\\
=& \frac{1}{\alpha-1} \log \mathbb E_q[(\frac{dp}{dq})^\alpha] - \frac{1}{\alpha} \log \mathbb E_q[(\frac{dp}{dq})^\alpha] \\
=& \frac{1}{(\alpha-1)\alpha} \log \mathbb E_q[(\frac{dp}{dq})^\alpha] \\
=& \mathcal R_\alpha(p||q)
\end{aligned}
\end{equation*}

Therefore, the supremum is attained hence we proved (\ref{Renyi:var:eq}).
We refer to \cite{birrell2019distributional} for the complete and general proof. 
\end{proof}
It is not hard to show that the variational formula for Renyi divergence reduces to the well-known Donsker-Varahdan variational formula for the Kullback-Leibler divergence, when $\alpha \to 1$, \cite{birrell2019distributional}.

\section{Concavity Property of Cumulant GAN}
The concavity of the logarithmic function implies that 
\begin{equation}
\beta^{-1} \Lambda_{f,p} (\beta) \ge \mathbb E_p[ f(x)] \ ,
\label{concave:cgf:eq}
\end{equation}
which is nothing else but Jensen's inequality. If additionally $f$ is bounded, i.e., there is $M>0$ such that $|f(x)|\le M$ for all $x$ then a stronger inequality is obtained due to the fact that the domain of the logarithm is also bounded. Indeed, logarithm is strongly concave with modulus equal to the infimum value of the domain. In our case, strongly Jensen's inequality deduces that 
\begin{equation}
\beta^{-1} \Lambda_{f,p} (\beta) \ge \mathbb E_p[ f(x)] - \beta e^{-\beta M} \var_p(f(x))
\label{strongly:concave:cgf:eq}
\end{equation}

From Jensen's inequality (\ref{concave:cgf:eq}), it is easy to show that for all $\beta,\gamma\neq 0$
\begin{equation}
L(\beta,\gamma) \ge L(0,0) = \mathbb E_{p_r}[D(x)] - \mathbb E_{p_g}[D(x)]
\label{ineq:cumgan:loss:eq}
\end{equation}
A stricter inequality called Jensen's inequality for strongly convex/concave functions can be obtained is the function $D$ is bounded. Indeed, if $|D(x)|<M$ for all $x$ then the domain of the logarithmic function is also bounded leading to the stronger inequality
\begin{equation}
L(\beta,\gamma) \ge \mathbb E_{p_r}[D(x)] - \mathbb E_{p_g}[D(x)] 
- \beta e^{-\beta M} \var_{p_r}(D(x))  
- \gamma e^{-\gamma M} \var_{p_g}(D(x)) .
\label{strong:ineq:cumgan:loss:eq}
\end{equation}
Generally speaking, strong concavity/convexity is a strengthening of the notion of concavity/convexity, and some properties of strongly concave/convex functions are just “stronger versions” of analogous properties of concave/convex functions.

\section{A Quantitative Demonstration of the Weights' Interpretation}

We present an explicit example and highlight the interpretation of cumulant GAN as a weighing mechanism of each sample gradient contribution. We consider a one-dimensional Gaussian with mean value $\mu=1$ and variance $\sigma^2=1$, a linear generator $G(z)=z+\theta$ with $z\sim\mathcal{N}(0,1^2)$ and a linear discriminator $D(x) = \eta x$. We draw $m=10$ samples from the real distribution and $m=10$ samples from the noise distribution which are kept fixed for visualization purposes throughout the training phase.

\begin{figure}[ht]
\begin{center}
\includegraphics[width=0.9\columnwidth]{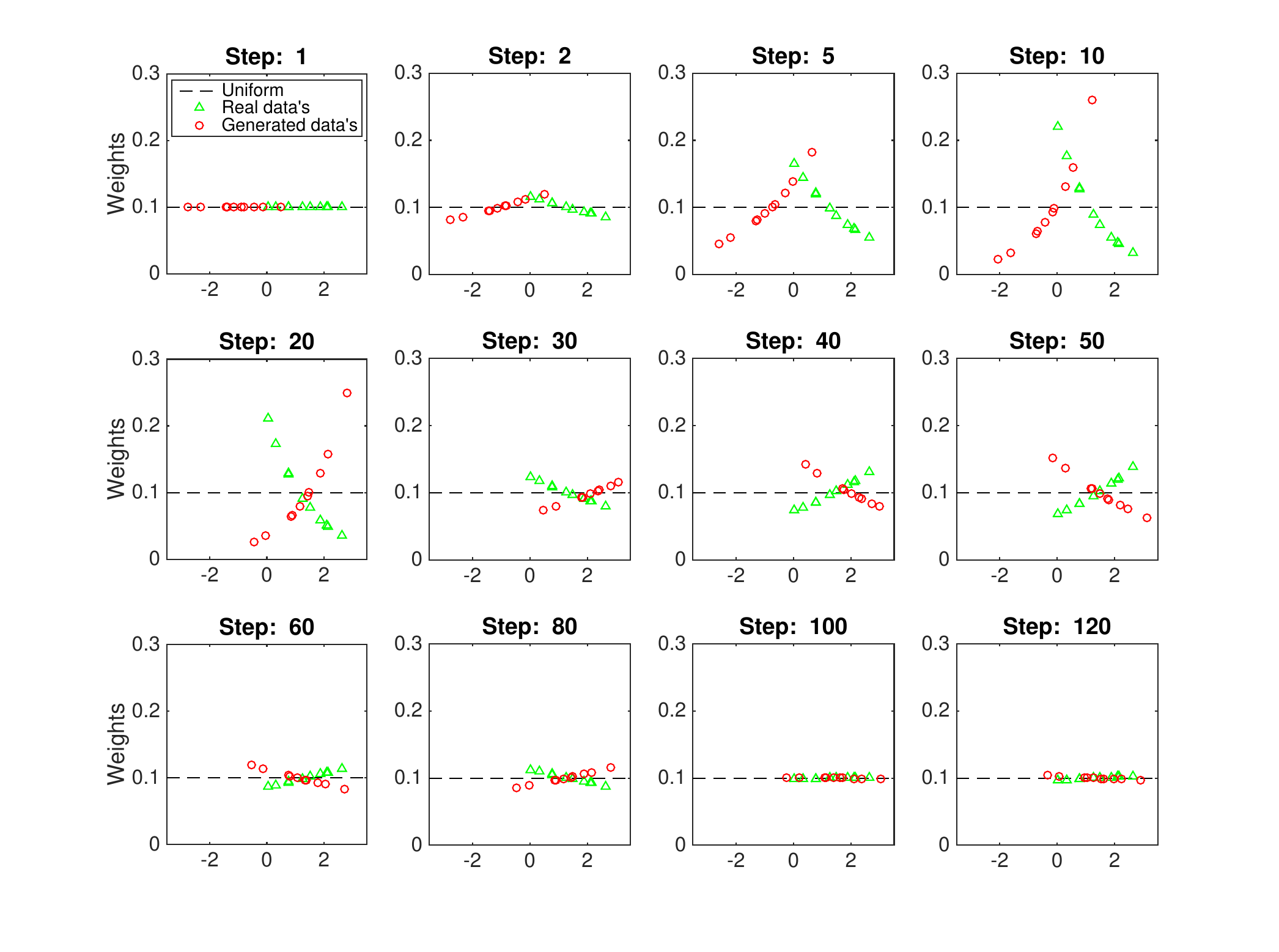}
\caption{The evolution of the weights $w_i^\beta={e^{-\beta D\left(x_i\right)}}/{\sum_{j=1}^m e^{-\beta D\left(x_j\right)}}$ (green triangles) and $w_i^\gamma={e^{\gamma D\left(G(z_i)\right)}}/{\sum_{j=1}^m e^{\gamma D\left(G(z_j)\right)}}$ (red circles) with $i=1,...,m$ as training progresses. Here, we set $\beta=\gamma=0.5$ while one update for the discriminator is followed by one update for the generator (i.e., $k=1$ in Algorithm 1 of the main text).}
\label{weights:k=1:fig}
\end{center}
\end{figure}

\begin{figure}[t]
\begin{center}
\includegraphics[width=0.9\columnwidth]{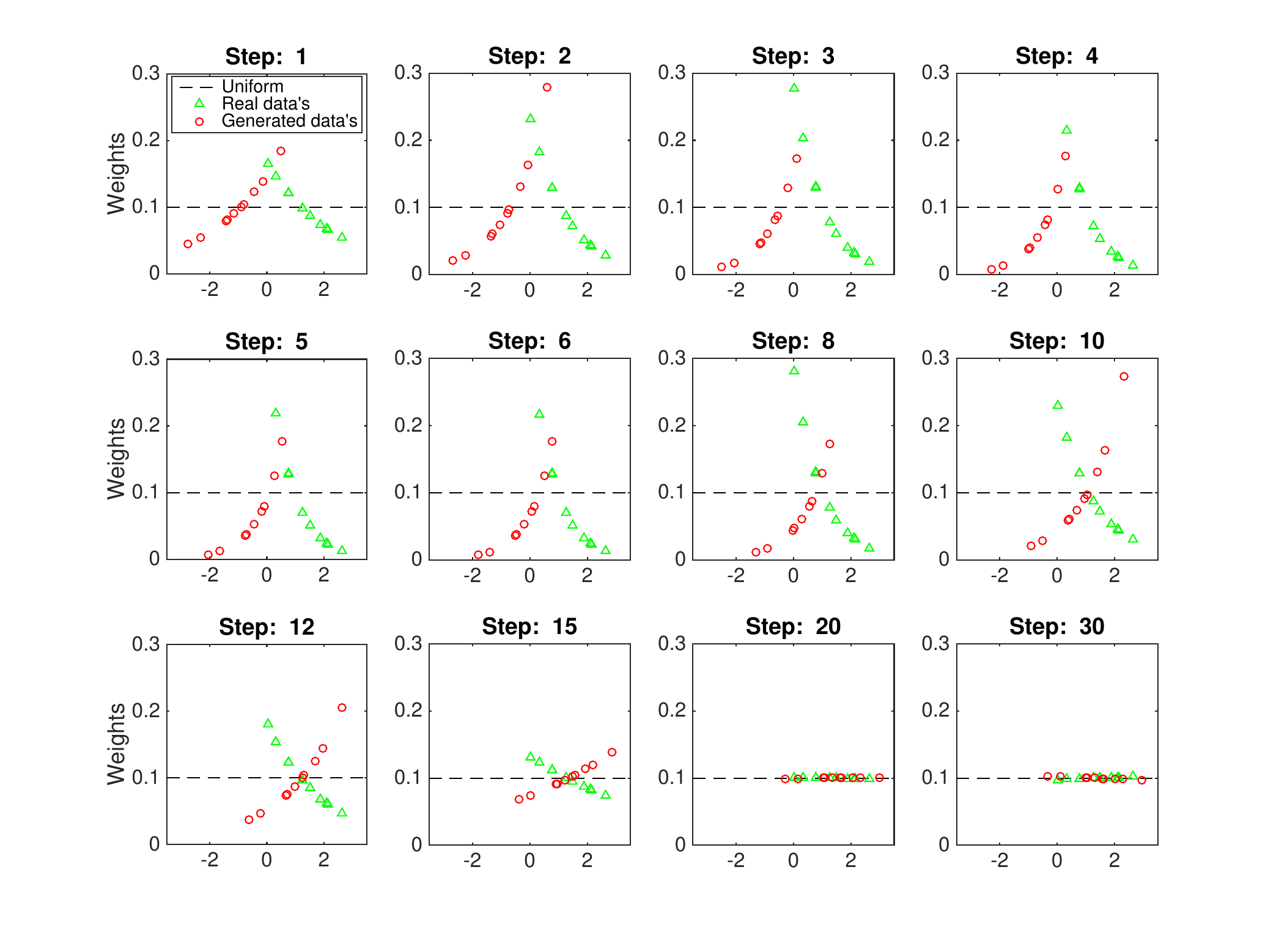}
\caption{Same as Fig. \ref{weights:k=1:fig} but with $k=5$ gradient updates for the discriminator before a single update for the generator. Now, the optimization algorithm converges about 5 times faster relative to the case where $k=1$ as evidenced by a direct comparison in the number of required steps.}
\label{weights:k=5:fig}
\end{center}
\end{figure}

\begin{figure}[t]
\begin{center}
\includegraphics[width=0.42\columnwidth]{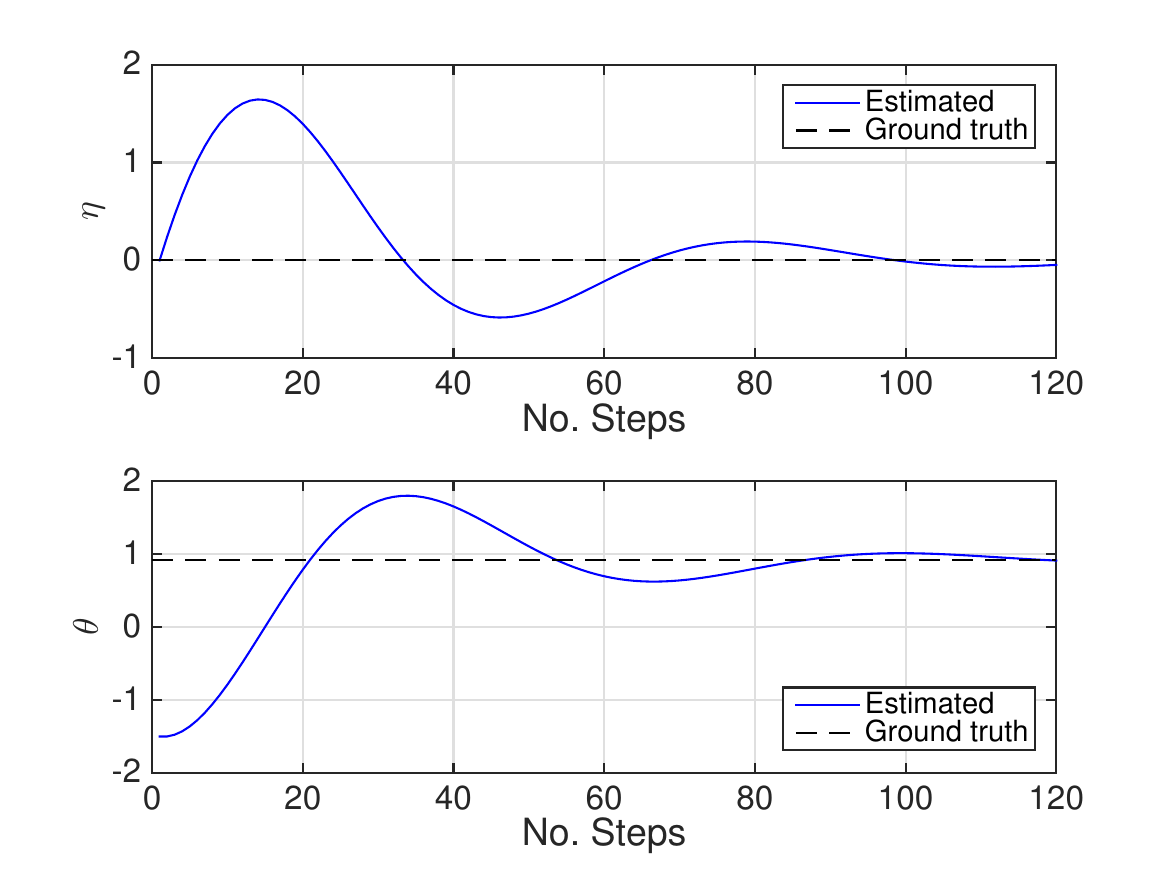}
\includegraphics[width=0.42\columnwidth]{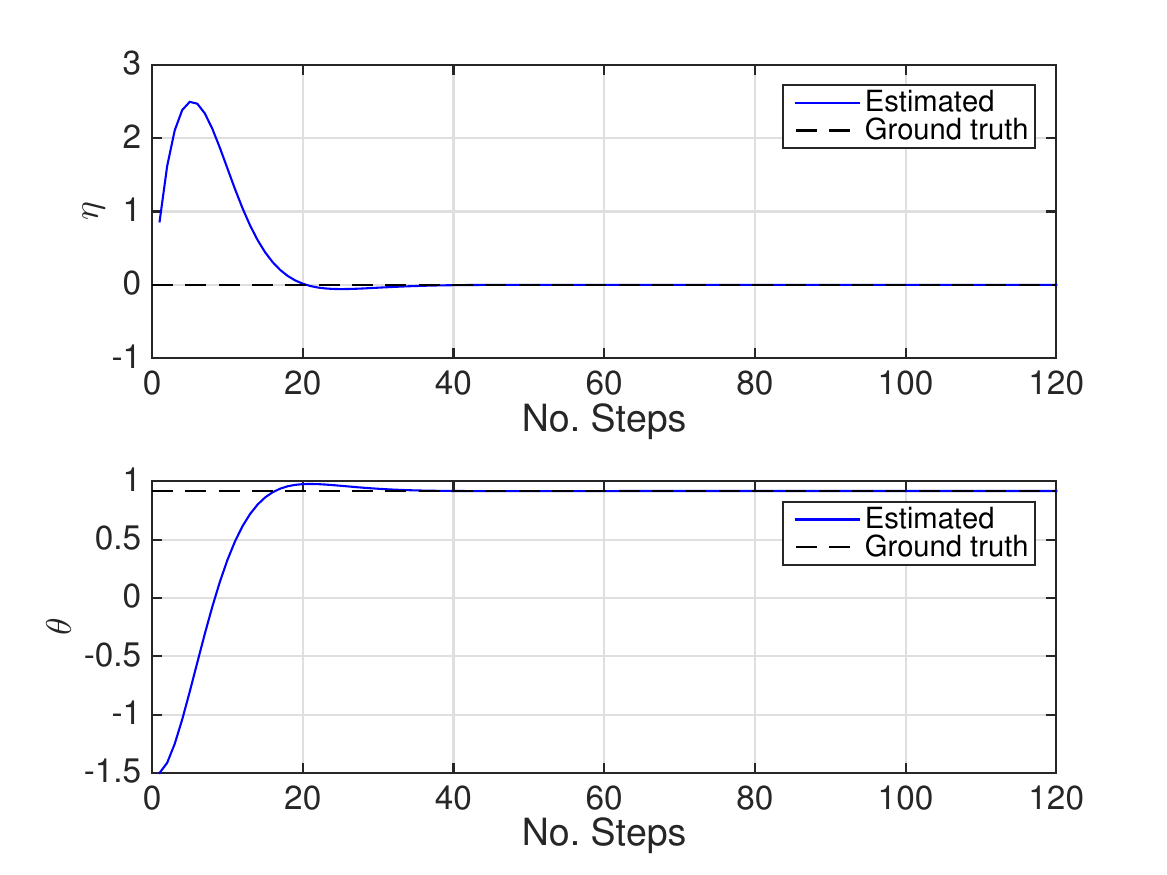}
\caption{ The evolution of the estimated parameters for the discriminator (upper plots) and the generator (lower plots) as a function of the number of training steps. The leftmost plots correspond to $k=1$ while $k=5$ in the rightmost plots. We clarify that the ground truth (dashed lines) corresponds to the statistical mean of the realized real data (i.e., to $\hat{\mu}=\frac{1}{m}\sum_{i=1}^m x_i$).}
\label{parameter:dynamics:fig}
\end{center}
\end{figure}

Fig. \ref{weights:k=1:fig} shows the weights $w_i^\beta$ (green triangles) and $w_i^\gamma$ (red circles) for each sample of the real and synthetic distribution, respectively,  at various stages of the training procedure. During the first training steps (first row of plots in Fig. \ref{weights:k=1:fig}), we observe that the samples from both distributions that are close to the decision boundary or inside the wrong classification regime are assigned with higher weights (see Steps 5 \& 10). Indeed, a positive $\eta$ made the samples close to the decision boundary to have larger weights and thus contribute more to the gradient update. As training progresses (steps between 10 and 30), the red cloud of points (the synthetic data) moves beyond the green cloud of points implying that the estimated parameter $\theta$ advances far  further to the right surpassing the true mean thus the samples that previously caused trouble to the discriminator becomes now easy to discriminate because the discriminator switched the sign for its parameter, $\eta$. Subsequently, the decision boundary changed to the opposite end of the distributions' support and the respective samples enjoy now higher weights (step 40). This flipping is evident between steps 30 and 40 as well as between 60 and 80. As expected, the weights become uniformly equal at equilibrium when the optimization algorithm has converged to the optimum solution (training steps beyond 100).

The evolution of the estimated parameters as a function of the training steps is shown in Fig. \ref{parameter:dynamics:fig} (leftmost plots). The (damped) oscillatory behavior can be further reduced by applying the gradient update for the discriminator $k$ times before an update of the generator's parameters. The rightmost plots of Fig. \ref{parameter:dynamics:fig} shows the evolution of the estimated parameters when $k=5$. Additionally, Fig. \ref{weights:k=5:fig} demonstrates the evolution of the weights for $k=5$. Similar conclusions as for $k=1$ can be made for the weights, however, it is evident that a more faithful (and better) discriminator made the weights more accurate and reliable and thus the training process converged significantly faster.
Overall, this example, despite being simple, highlights the effect of the discriminator on the weights as well as quantitatively demonstrates the interpretation of cumulant GAN as a weighting mechanism for $\beta,\gamma>0$.

\section{Core Code for Cumulant GAN}
Next, we present the core part of the implementation of \emph{cumulant GAN}.

{\small
\begin{lstlisting}[language=Python]
fake_data = Generator()
disc_real = Discriminator(real_data)
disc_fake = Discriminator(fake_data)

def loss_function(disc_real,disc_fake,beta,gamma):


    max_val = tf.reduce_max((-beta) * disc_real)
    disc_cost_real = -(1.0/beta)*(tf.log(tf.reduce_mean(tf.exp((-beta)*disc_real-max_val)))+max_val)

    max_val = tf.reduce_max((gamma) * disc_fake)
    disc_cost_fake = (1.0/gamma)*(tf.log(tf.reduce_mean(tf.exp(gamma*disc_fake-max_val)))+max_val)
    gen_cost = -(1.0/gamma)*(tf.log(tf.reduce_mean(tf.exp(gamma*disc_fake-max_val)))+max_val)

    disc_cost = disc_cost_fake - disc_cost_real
    
    alpha = tf.random_uniform(
        shape=[64,1], 
        minval=0.,maxval=1.)
        
    differences = fake_data - real_data
    interpolates = real_data + (alpha*differences)
    gradients = tf.gradients(Discriminator(interpolates), [interpolates])[0]
    slopes = tf.sqrt(tf.reduce_sum(tf.square(gradients),reduction_indices=[1]))
    gradient_penalty = tf.reduce_mean((tf.math.maximum(0.,(slopes-1.)))**2)
    disc_cost += 10*gradient_penalty
    
    gen_train_op = tf.train.AdamOptimizer(learning_rate=1e-4, beta1=0.,
        beta2=0.9).minimize(gen_cost,
        var_list=lib.params_with_name('Generator'), 
        colocate_gradients_with_ops=True)
        
    disc_train_op = tf.train.AdamOptimizer(learning_rate=1e-4, beta1=0.,
        beta2=0.9).minimize(disc_cost,
        var_list=lib.params_with_name('Discriminator.'), 
        colocate_gradients_with_ops=True)
 
    return gen_train_op, disc_train_op
    
\end{lstlisting}}

\section{Additional Experiments with Synthetic Datasets}

\subsection{GMM-8 dataset with different neural network architectures}
We first assess the performance of cumulant GAN using different architectures for the generator and the discriminator. The dataset is the GMM with 8 equiprobable modes as in the main text. We tested three architectures which is the same for both the generator and the discriminator. These are: 2 hidden layers with 64 and 16 units per layer (Figs. \ref{2layers:64units:fig} and \ref{2layers:16units:fig}, repsectively) and 3 hidden layers with 32 units per layer (Fig. \ref{3layers:32units:fig}). Evidently, the minimization of Hellinger distance gave the best results (last column in each figure) in terms of accuracy. Despite the performance differences, the main conclusion that KLD performs mode covering while reverse KLD performs mode selection is still valid. 

\begin{figure}[ht]
\begin{center}
\centerline{\includegraphics[width=0.95\columnwidth]{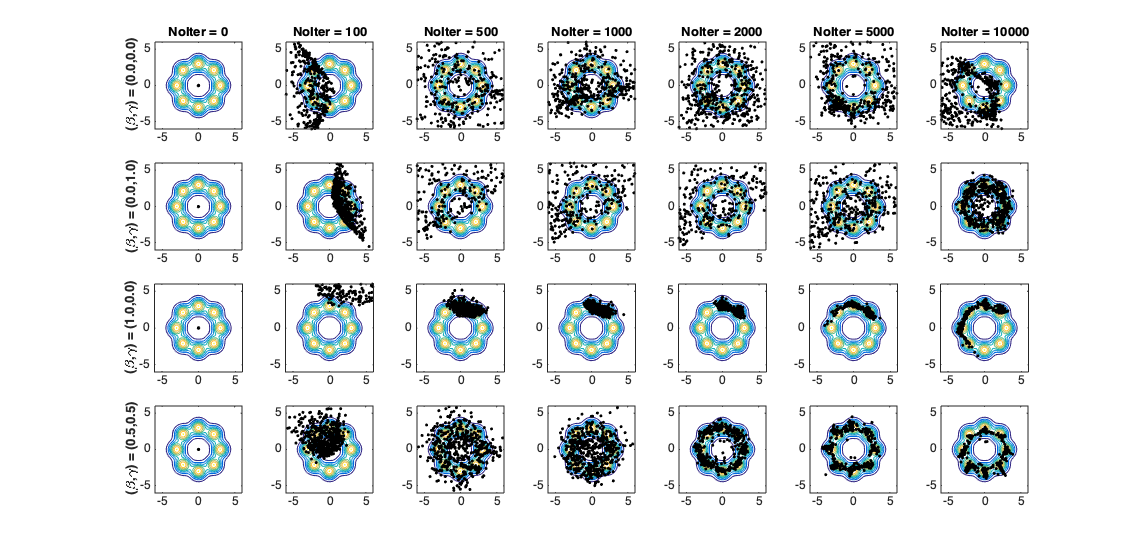}}

\caption{Generated samples using the Wasserstein distance using clipping (1st row), KL divergence (2nd row), reverse KLD (3rd row) and Hellinger distance (last row). Two hidden layers with 64 units per layer are used. The results are similar to Fig. 3 from the main text.}
\label{2layers:64units:fig}
\end{center}
\end{figure}

\begin{figure}[h]
\begin{center}
\centerline{\includegraphics[width=0.9\columnwidth]{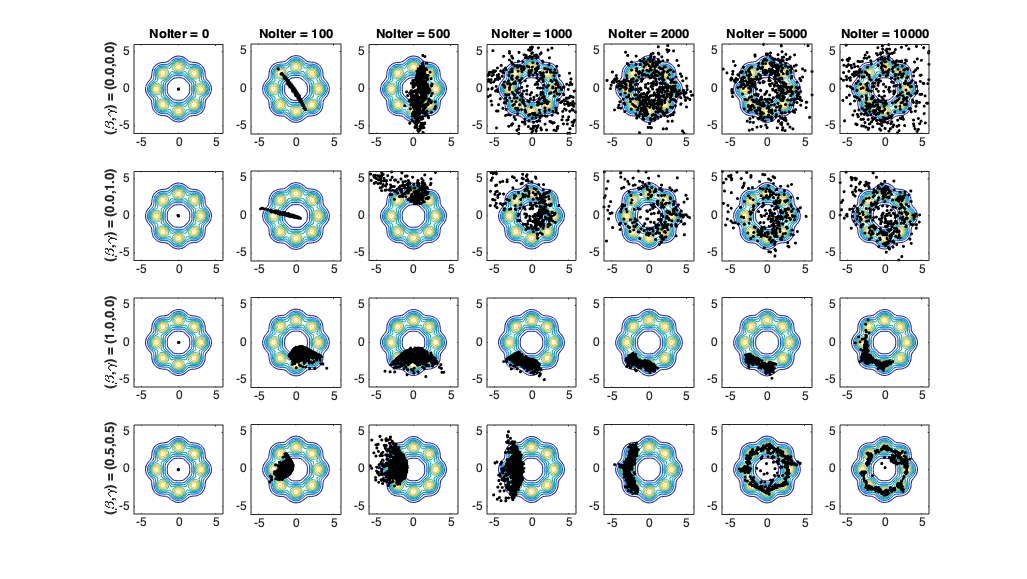}}
\caption{Same as Fig. \ref{2layers:64units:fig} but with 16 units per hidden layer. The capacity of the neural nets is low resulting in convergence instabilities. Despite the low capacity, training with Hellinger distance (last row) resulted in an accurate and stable solution.}
\label{2layers:16units:fig}
\end{center}
\end{figure}

\begin{figure}[h]
\begin{center}
\centerline{\includegraphics[width=0.9\columnwidth]{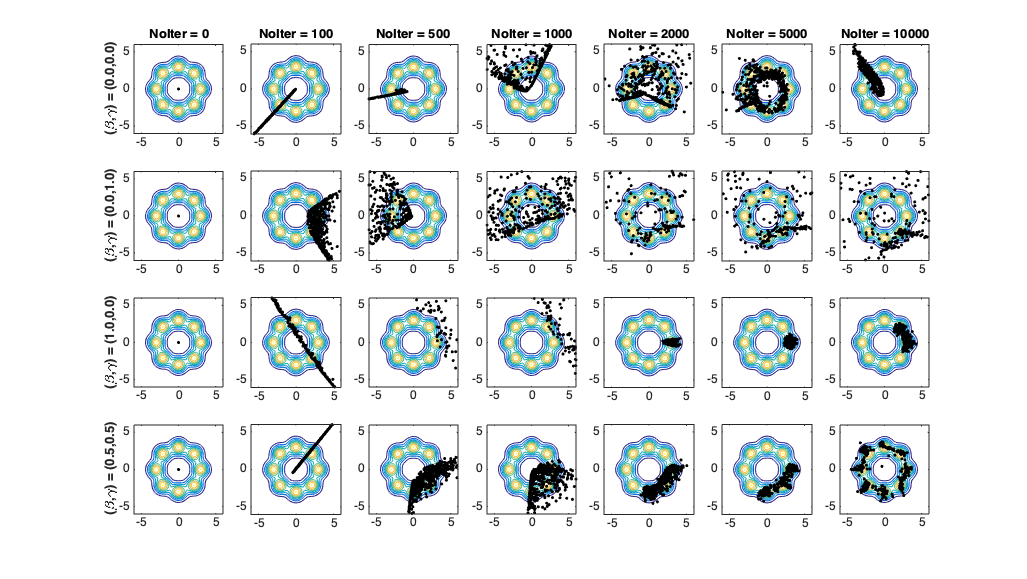}}
\caption{Same as Fig. \ref{2layers:64units:fig} but with 3 hidden layers and 32 units per layer. Here the capacity of the NNs, especially for the discriminator, is rather high resulting in gradient vanishing. Again, Hellinger distance (last row) produced the most accurate result for the given number of iterations.}
\label{3layers:32units:fig}
\end{center}
\end{figure}

\newpage

\subsection{Testing Cumulant GAN on various datasets}
We consider two synthetic datasets with diverse statistical properties to further validate the outcomes of cumulant GAN. The first one is a mixture model of 6 equiprobable Student's t distributions. The characteristic property of this distribution is that it is heavy-tailed. Thus samples can be observed far from the mean value. Figs. \ref{tmm6:32units:fig}-\ref{tmm6:32units:3layers:fig} present the learning dynamics for three different architectures. We observe that the choice of the architecture is crucial, especially when KLD is utilized while Hellinger distance minimization provided the most robust and accurate results across all tested architectures. On the other hand, WGAN was never able to converge when the number of iterations is set to $10K$.


\begin{figure}[ht]
\begin{center}
\centerline{\includegraphics[width=0.9\columnwidth]{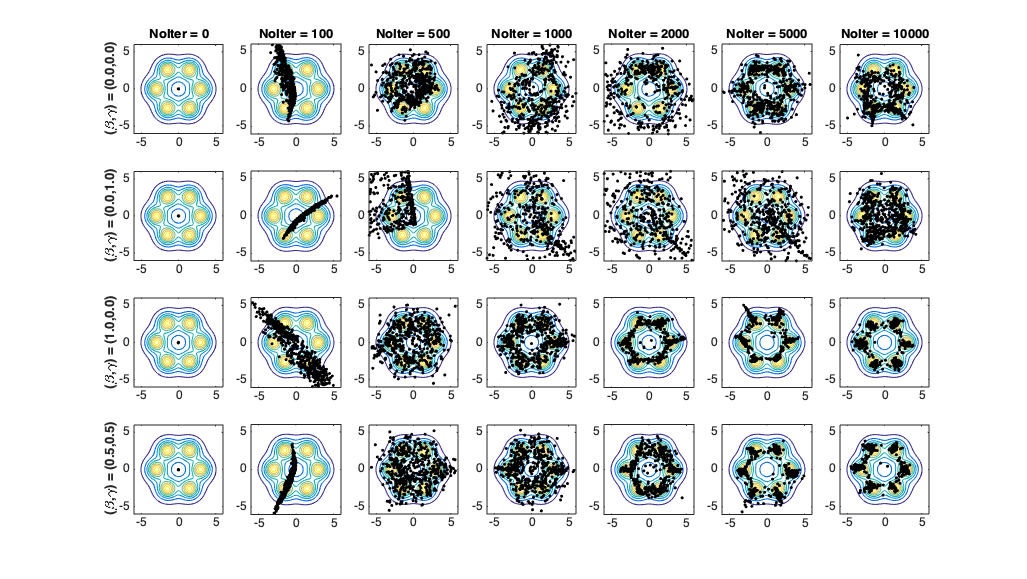}}
\caption{Generated samples using the Wasserstein distance using clipping (1st row), KL divergence (2nd row), reverse KLD (3rd row) and Hellinger distance (last row). 2 hidden layers with 32 units per layer are used. KLD minimization diverged for this particular architectural choice.}
\label{tmm6:32units:fig}
\end{center}
\end{figure}

\begin{figure}[ht]
\begin{center}
\centerline{\includegraphics[width=0.95\columnwidth]{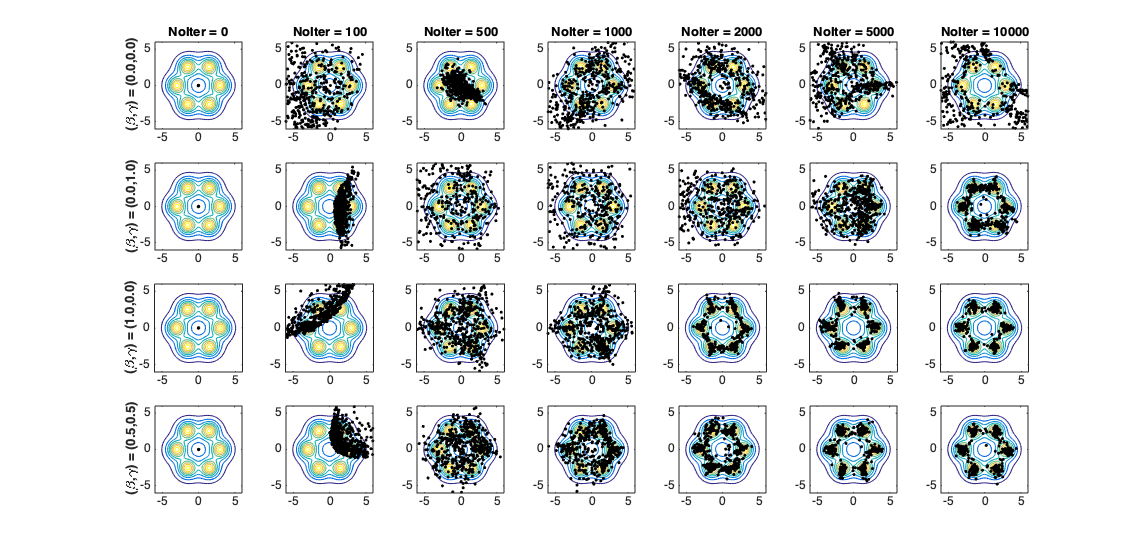}}
\caption{Same as Fig. \ref{tmm6:32units:fig} but with 64 units per layer. This architectural choice provided the most accurate results across all cumulant loss functions.}
\label{tmm6:64units:fig}
\end{center}
\end{figure}

\begin{figure}[h]
\begin{center}
\centerline{\includegraphics[width=0.95\columnwidth]{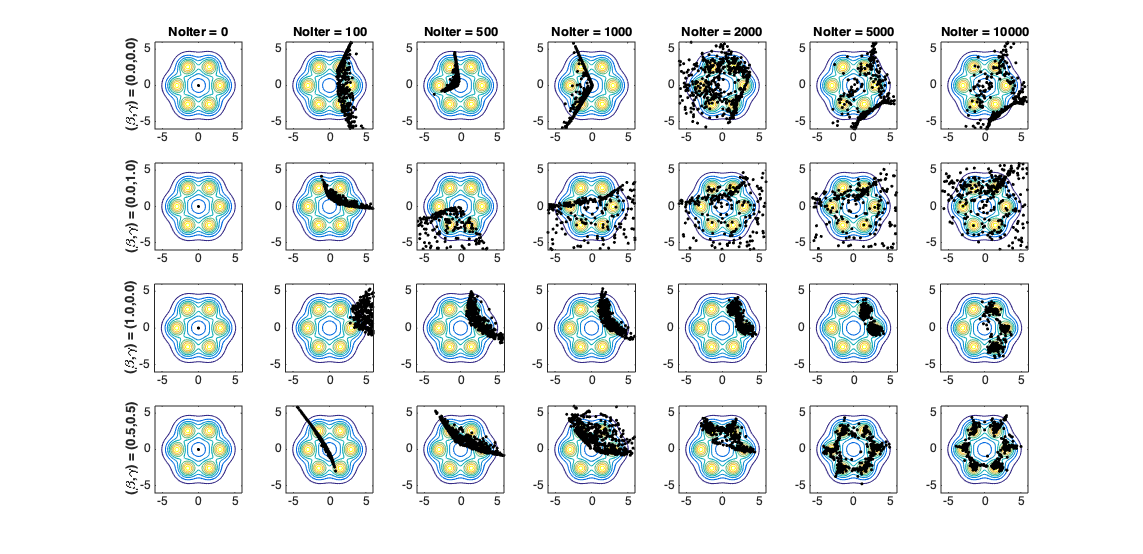}}
\caption{Same as Fig. \ref{tmm6:32units:fig} but with 3 hidden layers and 32 units per layer. Apart from Hellinger distance minimization, all other losses failed to converge in $10K$ iterations.}
\label{tmm6:32units:3layers:fig}
\end{center}
\end{figure}

\clearpage

The second dataset is the Swiss-roll (i.e., spiral) dataset. This is a challenging example due to its complex manifold structure. Therefore the number of iterations required for training is increased by one order of magnitude. Figs. \ref{swissroll:32units:fig}-\ref{swissroll:32units:3layers:fig} show the training dynamics for three different architectures. Again, Hellinger distance minimization produced the more accurate and robust results while WGAN was unable to converge in all runs of Swiss-roll dataset.

\begin{figure}[hb]
\begin{center}
\centerline{\includegraphics[width=0.95\columnwidth]{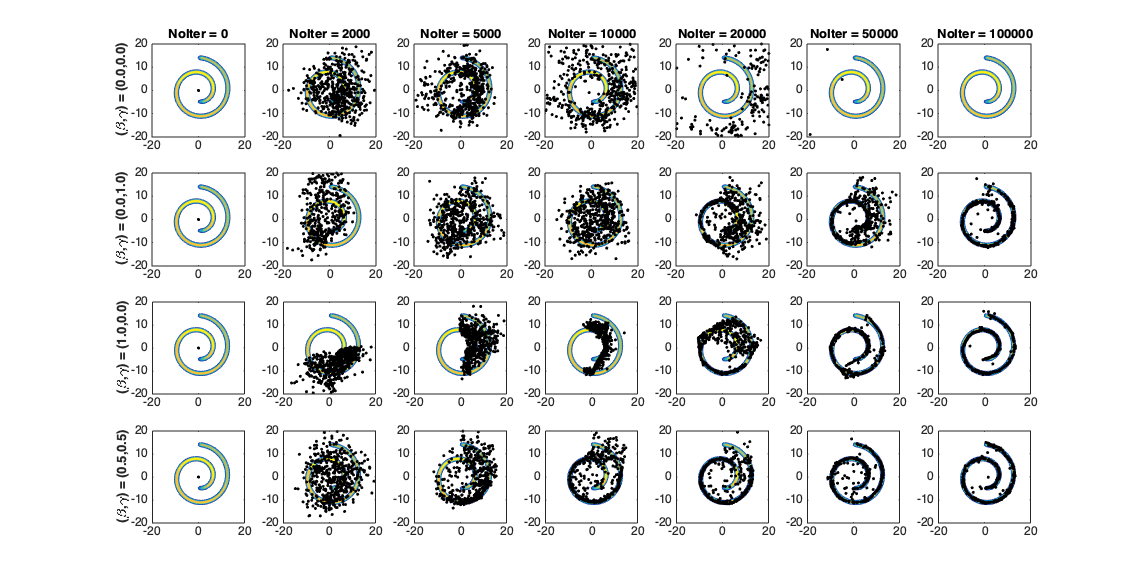}}
\caption{Generated samples using the Wasserstein distance using clipping (1st row), KL divergence (2nd row), reverse KLD (3rd row) and Hellinger distance (last row). 2 hidden layers with 32 units per layer are used. All cumulant losses were able to learn the Swiss-roll distribution.}
\label{swissroll:32units:fig}
\end{center}
\end{figure}

\begin{figure}[h]
\begin{center}
\centerline{\includegraphics[width=0.95\columnwidth]{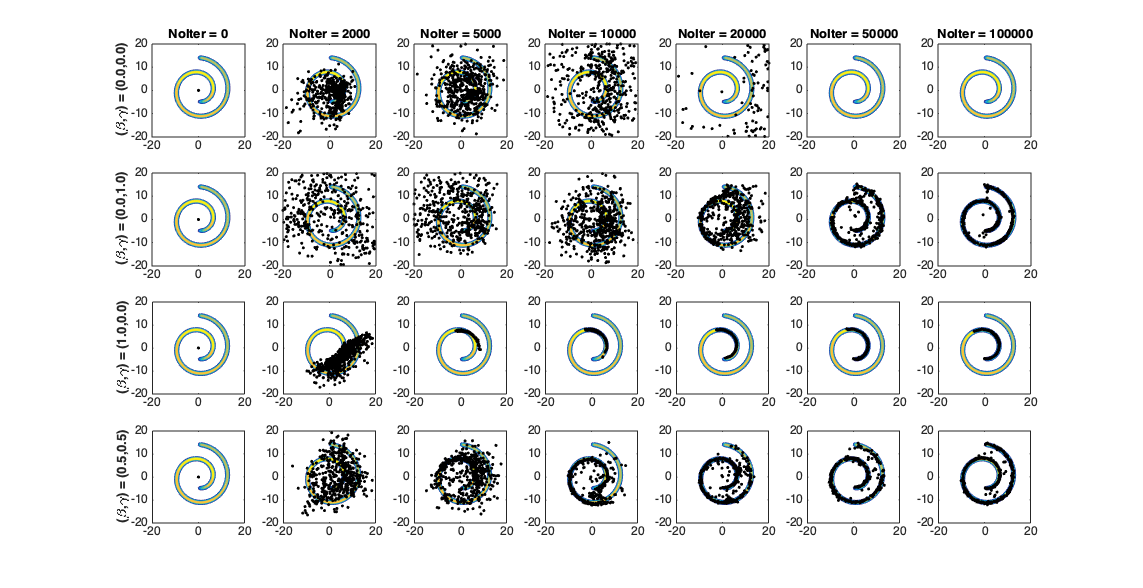}}
\caption{Same as Fig. \ref{swissroll:32units:fig} but with 64 hidden units per layer.}
\label{swissroll:64units:fig}
\end{center}
\end{figure}

\newpage 

\begin{figure}[t]
\begin{center}
\centerline{\includegraphics[width=1\columnwidth]{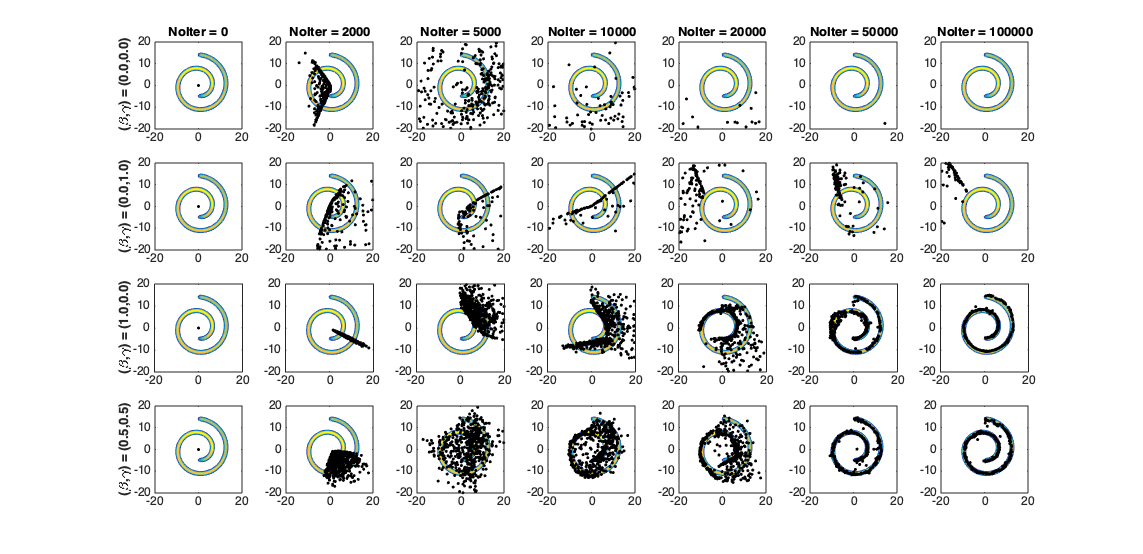}}
\caption{Same as Fig. \ref{swissroll:32units:fig} but with 3 hidden layers and 32 units per layer. As in the previous two figures, Hellinger distance minimization produced the most accurate results.}
\label{swissroll:32units:3layers:fig}
\end{center}
\end{figure}


\subsection{Exploring the d-rays}
Figs. \ref{KLD:ray:fig}-\ref{Hellinger:ray:fig} present the training dynamics across three different d-rays. We consider KLD (Fig. \ref{KLD:ray:fig}), reverse KLD (Fig. \ref{reverse:KLD:ray:fig}) and Hellinger distance (Fig. \ref{Hellinger:ray:fig}) as loss functions and three values for $\delta$. For each ray, we observe similar qualitatively patterns. Indeed, mode covering is observed when KLD is minimized while mode selection is observed when reverse KLD is minimized. It is also evident that the speed of convergence to the real distribution is different as the last column of both Figs. \ref{KLD:ray:fig} and \ref{reverse:KLD:ray:fig} reveals.

\begin{figure}[ht]
\begin{center}
\centerline{\includegraphics[width=1\columnwidth]{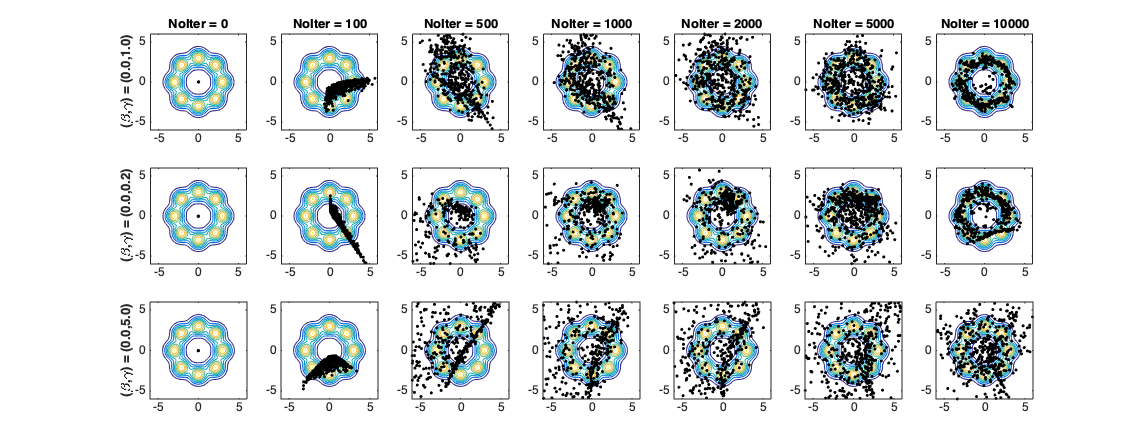}}
\caption{Generated samples at different stages of the training process. Each row correspond to minimizing the KLD (closer to mode covering dynamics) but with different $\delta$ (first row: $\delta=1$, second row: $\delta=0.2$ and third row: $\delta=5$). }
\label{KLD:ray:fig}
\end{center}
\end{figure}

\begin{figure}[t]
\begin{center}
\centerline{\includegraphics[width=1\columnwidth]{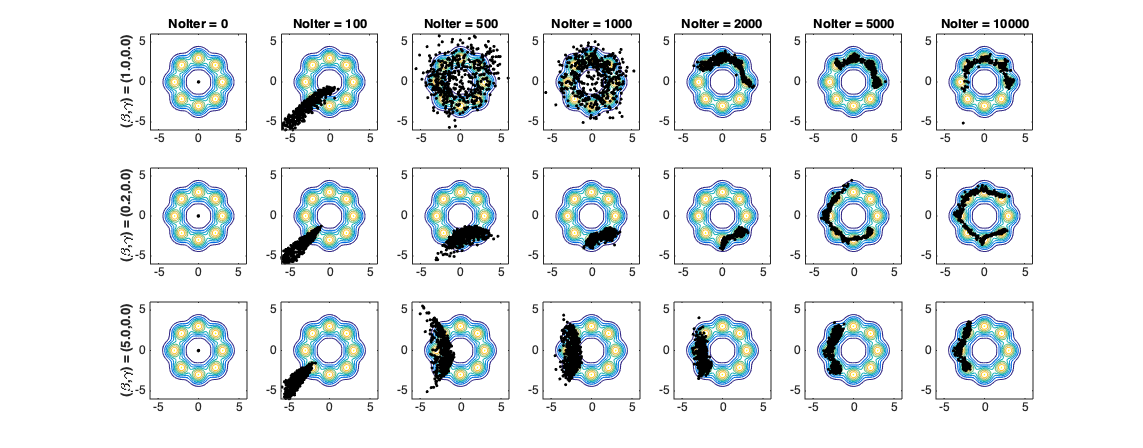}}
\caption{Same as Fig. \ref{KLD:ray:fig} but for the reverse KLD. Model selection of different extend is observed for all $\delta$'s.}
\label{reverse:KLD:ray:fig}
\end{center}
\end{figure}

\begin{figure}[t]
\begin{center}
\centerline{\includegraphics[width=1\columnwidth]{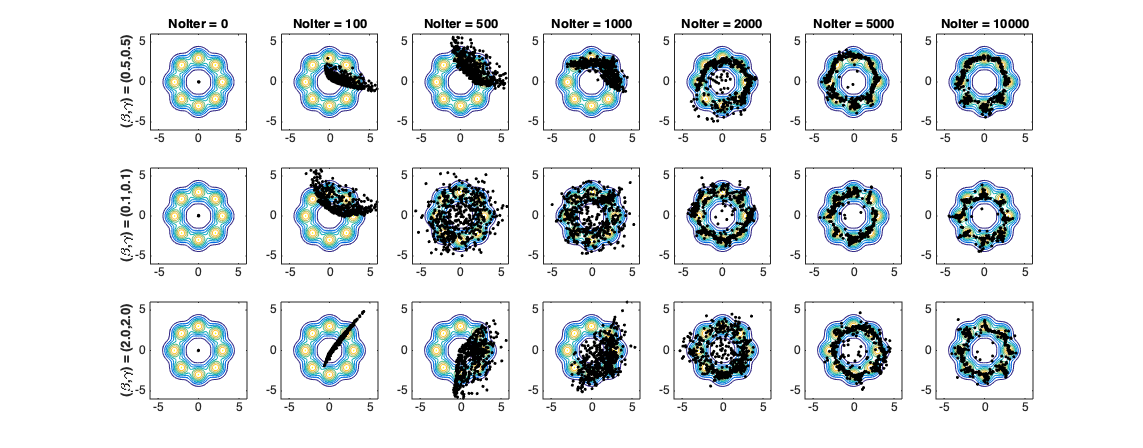}}
\caption{Same as Fig. \ref{KLD:ray:fig} but for the Hellinger distance.}
\label{Hellinger:ray:fig}
\end{center}
\end{figure}

\clearpage
\section{Experimental Details}

Here, we describe the experimental setup and architectural details for all the experiments presented in the paper. Three architectures have been used to compare the performance of four  GAN losses: Wasserstein, Kullback-Leibler
divergence (KLD), reverse KLD and Hellinger distance. The architectures whose successful training we demonstrate are described as follows: (i) convolutional layers for CIFAR-10 data, (ii) residual blocks for CIFAR-10 data (iii) residual blocks for ImageNet data. In the convolutional architecture, batch normalization is applied only for generator but not for discriminator. Whereas, we implemented layer normalization in both generator and discriminator. We used Adam as the optimizer with a learning rate of 0.0001. We trained the model for a total of 200,000 iterations on CIFAR-10 and 500,000 iterations on ImageNet, with a mini-batch of 128 and 64, respectively.

\subsection{CIFAR-10 Convolutional Architecture}

\begin{table}[H]
\centering
\resizebox{0.9\columnwidth}{!}{%
\setlength{\tabcolsep}{15pt}
\renewcommand{\arraystretch}{1.5}
\begin{tabular}{ccccc}
\specialrule{.1em}{.05em}{.05em}
\multicolumn{5}{c}{Generator}                                           \\ \specialrule{.1em}{.05em}{.05em}
Layer            & Kernel & Output shape & Stride & Activation function \\ \specialrule{.1em}{.05em}{.05em}
Input $z$          & -      & 128          & -      & -                   \\
Linear           & -      & $512\times4\times4$      & -      & -                   \\
Transposed convolution 1 & $5\times5$    & $256\times8\times8$      & 1      & ReLU                \\
Transposed convolution 2 & $5\times5$    & $128\times16\times16$    & 1      & ReLU                \\
Transposed convolution 3    & $5\times5$    & $3\times32\times32$      & 1      & tanh                \\ \specialrule{.1em}{.05em}{.05em}
\multicolumn{5}{c}{Discriminator}                                       \\ \specialrule{.1em}{.05em}{.05em}
Convolution      & $5\times5$    & $256\times8\times8$     & 4      &  Leaky ReLU                  \\
Linear           & -      & 1            & -      & -                   \\ \specialrule{.1em}{.05em}{.05em}
\end{tabular}%
}
\end{table}

\subsection{CIFAR-10 Residual Architecture - Weak Discriminator}

\begin{table}[H]
\centering
\resizebox{0.9\columnwidth}{!}{%
\setlength{\tabcolsep}{15pt}
\renewcommand{\arraystretch}{1.2}
\begin{tabular}{ccccc}
\specialrule{.1em}{.05em}{.05em}
\multicolumn{5}{c}{Generator}                                           \\ \specialrule{.1em}{.05em}{.05em} 
Layer            & Kernel & Output shape & Stride & Activation function \\ \specialrule{.1em}{.05em}{.05em}
Input $z$          & -      & 128          & -      & -                   \\
Linear           & -      & $1024\times2\times2$      & -      & -                   \\
Residual block 1 & $3\times3$    & $1024\times4\times4$      & 1      & ReLU                \\
Residual block 2 & $3\times3$    & $512\times8\times8$      & 1      & ReLU                \\
Residual block 3 & $3\times3$    & $256\times16\times16$    & 1      & ReLU                \\
Residual block 4 & $3\times3$    & $128\times32\times32$     & 1      & ReLU                \\
Convolution      & $3\times3$    & $3\times32\times32$      & 1      & tanh                \\ \specialrule{.1em}{.05em}{.05em}
\multicolumn{5}{c}{Discriminator}                                       \\ \specialrule{.1em}{.05em}{.05em}
Convolution      & $3\times3$    & $128\times32\times32$     & 1      & -                   \\
Residual block 1 & $3\times3$   & $256\times16\times16$    & 1      & ReLU                \\
Residual block 2 & $3\times3$    & $256\times8\times8$      & 1      & ReLU                \\
Linear           & -      & 1            & -      & -                   \\ \specialrule{.1em}{.05em}{.05em}
\end{tabular}%
}
\end{table}

\newpage
\subsection{CIFAR-10 Residual Architecture - Strong Discriminator (Version 1)}

\begin{table}[h]
\centering
\resizebox{0.9\columnwidth}{!}{%
\setlength{\tabcolsep}{15pt}
\renewcommand{\arraystretch}{1.2}
\begin{tabular}{ccccc}
\specialrule{.1em}{.05em}{.05em}
\multicolumn{5}{c}{Generator}                                           \\ \specialrule{.1em}{.05em}{.05em} 
Layer            & Kernel & Output shape & Stride & Activation function \\ \specialrule{.1em}{.05em}{.05em}
Input $z$          & -      & 128          & -      & -                   \\
Linear           & -      & $1024\times2\times2$      & -      & -                   \\
Residual block 1 & $3\times3$    & $1024\times4\times4$      & 1      & ReLU                \\
Residual block 2 & $3\times3$    & $512\times8\times8$      & 1      & ReLU                \\
Residual block 3 & $3\times3$    & $256\times16\times16$    & 1      & ReLU                \\
Residual block 4 & $3\times3$    & $128\times32\times32$     & 1      & ReLU                \\
Convolution      & $3\times3$    & $3\times32\times32$      & 1      & tanh                \\ \specialrule{.1em}{.05em}{.05em}
\multicolumn{5}{c}{Discriminator}                                       \\ \specialrule{.1em}{.05em}{.05em}
Convolution      & $3\times3$    & $128\times32\times32$     & 1      & -                   \\
Residual block 1 & $3\times3$   & $256\times16\times16$    & 1      & ReLU                \\
Residual block 2 & $3\times3$   & $256\times16\times16$    & 1      & ReLU                \\
Residual block 3 & $3\times3$   & $256\times16\times16$    & 1      & ReLU                \\
Linear           & -      & 1            & -      & -                   \\ \specialrule{.1em}{.05em}{.05em}
\end{tabular}%
}
\end{table}

\subsection{CIFAR-10 Residual Architecture - Strong Discriminator (Version 2)}

\begin{table}[h]
\centering
\resizebox{0.9\columnwidth}{!}{%
\setlength{\tabcolsep}{15pt}
\renewcommand{\arraystretch}{1.2}
\begin{tabular}{ccccc}
\specialrule{.1em}{.05em}{.05em}
\multicolumn{5}{c}{Generator}                                           \\ \specialrule{.1em}{.05em}{.05em} 
Layer            & Kernel & Output shape & Stride & Activation function \\ \specialrule{.1em}{.05em}{.05em}
Input $z$          & -      & 128          & -      & -                   \\
Linear           & -      & $1024\times2\times2$      & -      & -                   \\
Residual block 1 & $3\times3$    & $1024\times4\times4$      & 1      & ReLU                \\
Residual block 2 & $3\times3$    & $512\times8\times8$      & 1      & ReLU                \\
Residual block 3 & $3\times3$    & $256\times16\times16$    & 1      & ReLU                \\
Residual block 4 & $3\times3$    & $128\times32\times32$     & 1      & ReLU                \\
Convolution      & $3\times3$    & $3\times32\times32$      & 1      & tanh                \\ \specialrule{.1em}{.05em}{.05em}
\multicolumn{5}{c}{Discriminator}                                       \\ \specialrule{.1em}{.05em}{.05em}
Convolution      & $3\times3$    & $128\times32\times32$     & 1      & -                   \\
Residual block 1 & $3\times3$   & $128\times16\times16$    & 1      & ReLU                \\
Residual block 2 & $3\times3$   & $128\times16\times16$    & 1      & ReLU                \\
Residual block 3 & $3\times3$   & $128\times16\times16$    & 1      & ReLU                \\
Linear           & -      & 1            & -      & -                   \\ \specialrule{.1em}{.05em}{.05em}
\end{tabular}%
}
\end{table}

\newpage
\subsection{ImageNet Residual Architecture - Weak Discriminator}

\begin{table*}[h]
\centering
\resizebox{0.9\columnwidth}{!}{%
\setlength{\tabcolsep}{15pt}
\renewcommand{\arraystretch}{1.2}
\begin{tabular}{ccccc}
\specialrule{.1em}{.05em}{.05em}
\multicolumn{5}{c}{Generator}                                           \\ \specialrule{.1em}{.05em}{.05em}
Layer            & Kernel & Output shape & Stride & Activation function \\ \specialrule{.1em}{.05em}{.05em}
Input $z$          & -      & 128          & -      & -                   \\
Linear           & -      & $1024\times4\times4$      & -      & -                   \\
Residual block 1 & $3\times3$    & $1024\times8\times8$      & 1      & ReLU                \\
Residual block 2 & $3\times3$    & $512\times16\times16$      & 1      & ReLU                \\
Residual block 3 & $3\times3$    & $256\times32\times32$   & 1      & ReLU                \\
Residual block 4 & $3\times3$    & $128\times64\times64$     & 1      & ReLU                \\
Convolution      & $3\times3$    & $3\times64\times64$      & 1      & tanh                \\ \specialrule{.1em}{.05em}{.05em}
\multicolumn{5}{c}{Discriminator}                                       \\ \specialrule{.1em}{.05em}{.05em}
Convolution      & $3\times3$    & $128\times64\times64$     & 1      & -                   \\
Residual block 1 & $3\times3$    & $128\times32\times32$    & 1      & ReLU                \\
Residual block 2 & $3\times3$    & $256\times16\times16$      & 1      & ReLU                \\
Linear           & -      & 1            & -      & -                   \\ \specialrule{.1em}{.05em}{.05em}
\end{tabular}%
}
\end{table*}

\subsection{ImageNet Residual Architecture - Strong Discriminator}

\begin{table*}[h]
\centering
\resizebox{0.9\columnwidth}{!}{%
\setlength{\tabcolsep}{15pt}
\renewcommand{\arraystretch}{1.2}
\begin{tabular}{ccccc}
\specialrule{.1em}{.05em}{.05em}
\multicolumn{5}{c}{Generator}                                           \\ \specialrule{.1em}{.05em}{.05em}
Layer            & Kernel & Output shape & Stride & Activation function \\ \specialrule{.1em}{.05em}{.05em}
Input $z$          & -      & 128          & -      & -                   \\
Linear           & -      & $1024\times4\times4$      & -      & -                   \\
Residual block 1 & $3\times3$    & $1024\times8\times8$      & 1      & ReLU                \\
Residual block 2 & $3\times3$    & $512\times16\times16$      & 1      & ReLU                \\
Residual block 3 & $3\times3$    & $256\times32\times32$   & 1      & ReLU                \\
Residual block 4 & $3\times3$    & $128\times64\times64$     & 1      & ReLU                \\
Convolution      & $3\times3$    & $3\times64\times64$      & 1      & tanh                \\ \specialrule{.1em}{.05em}{.05em}
\multicolumn{5}{c}{Discriminator}                                       \\ \specialrule{.1em}{.05em}{.05em}
Convolution      & $3\times3$    & $128\times64\times64$     & 1      & -                   \\
Residual block 1 & $3\times3$    & $256\times32\times32$    & 1      & ReLU                \\
Residual block 2 & $3\times3$    & $512\times16\times16$      & 1      & ReLU                \\
Residual block 3 & $3\times3$    & $1024\times8\times8$    & 1      & ReLU                \\
Residual block 4 & $3\times3$    & $1024\times4\times4$      & 1      & ReLU                \\
Linear           & -      & 1            & -      & -                   \\ \specialrule{.1em}{.05em}{.05em}
\end{tabular}%
}
\end{table*}

\clearpage
\section{Generated Images}
In this section, generated samples from all the trained models are presented. We remark that all models are trained with GP regularization.

\begin{figure*}[h]
\begin{center}
\centerline{\includegraphics[width=18cm,height=16cm]{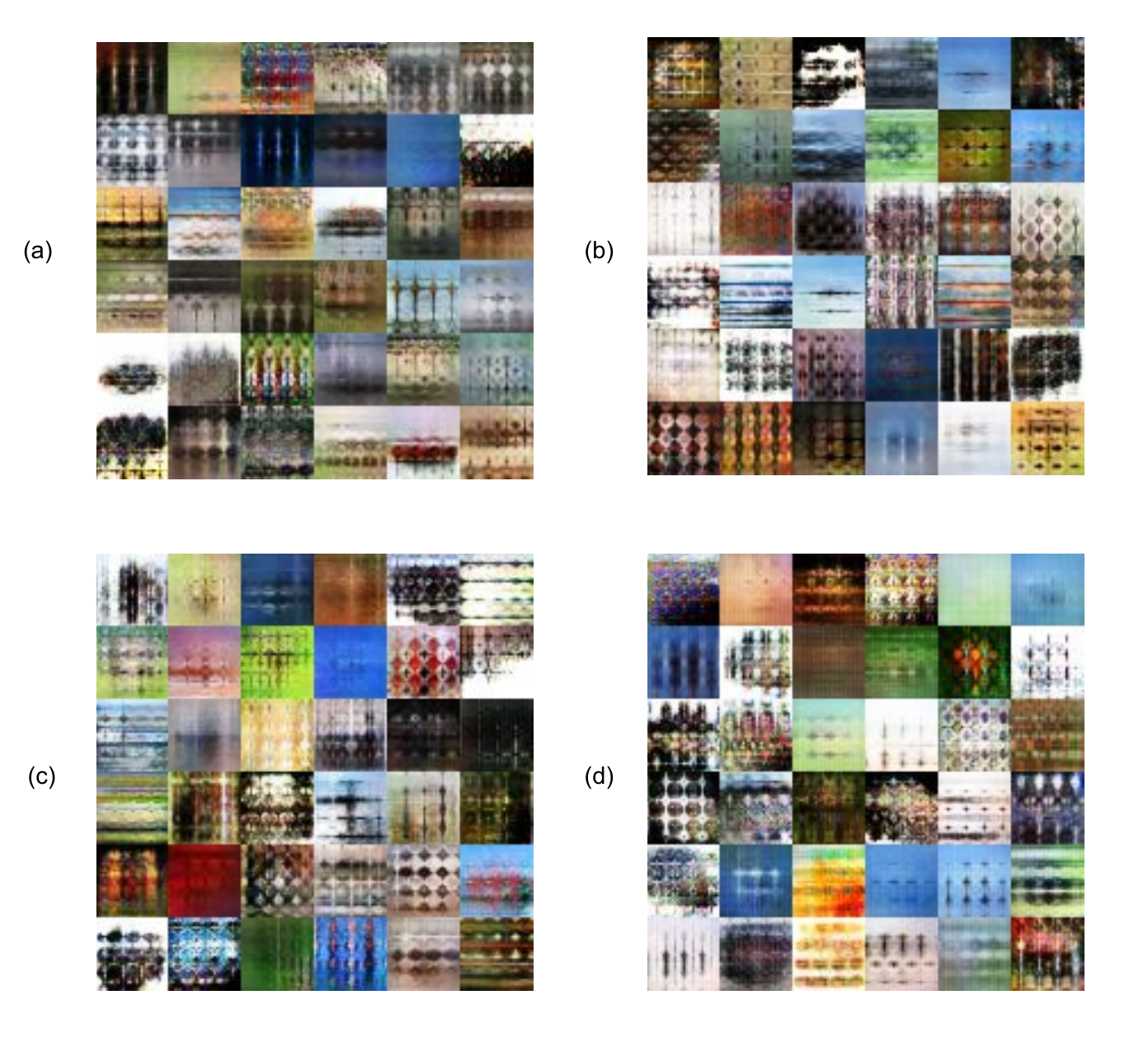}}
\caption{(a) WGAN, (b) KLD, (c) RKLD and (d) Hellinger distance: Samples of CIFAR-10 from generator and discriminator trained with convolutional networks.}
\end{center}
\end{figure*}

\begin{figure*}[t!]
\begin{center}
\centerline{\includegraphics[width=18cm,height=16cm]{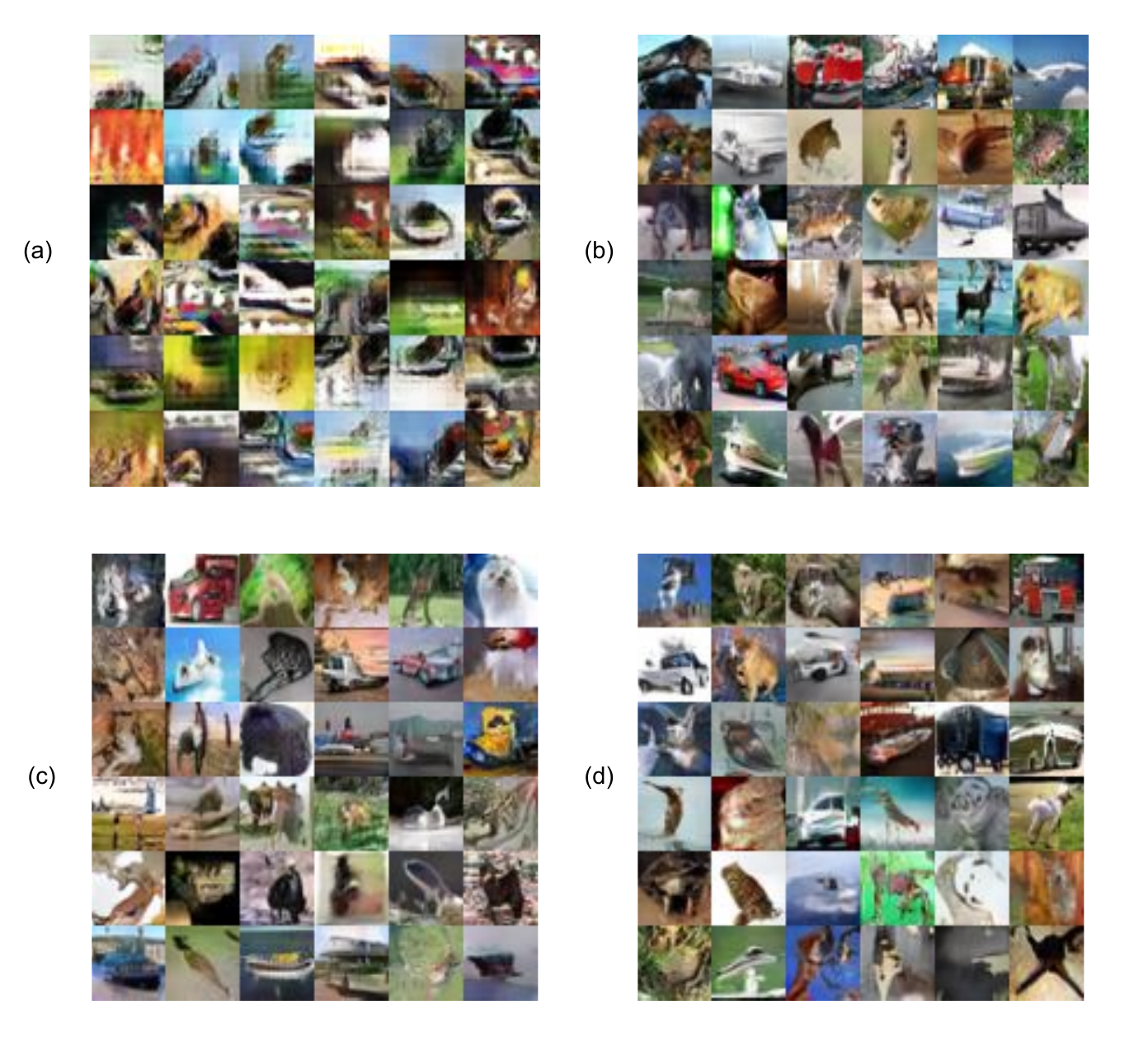}}
\caption{(a) WGAN, (b) KLD, (c) RKLD and (d) Hellinger distance: Samples of CIFAR-10 from generator and discriminator trained with residual network - weak discriminator.}
\end{center}
\end{figure*}

\begin{figure*}[t!]
\begin{center}
\centerline{\includegraphics[width=18cm,height=16cm]{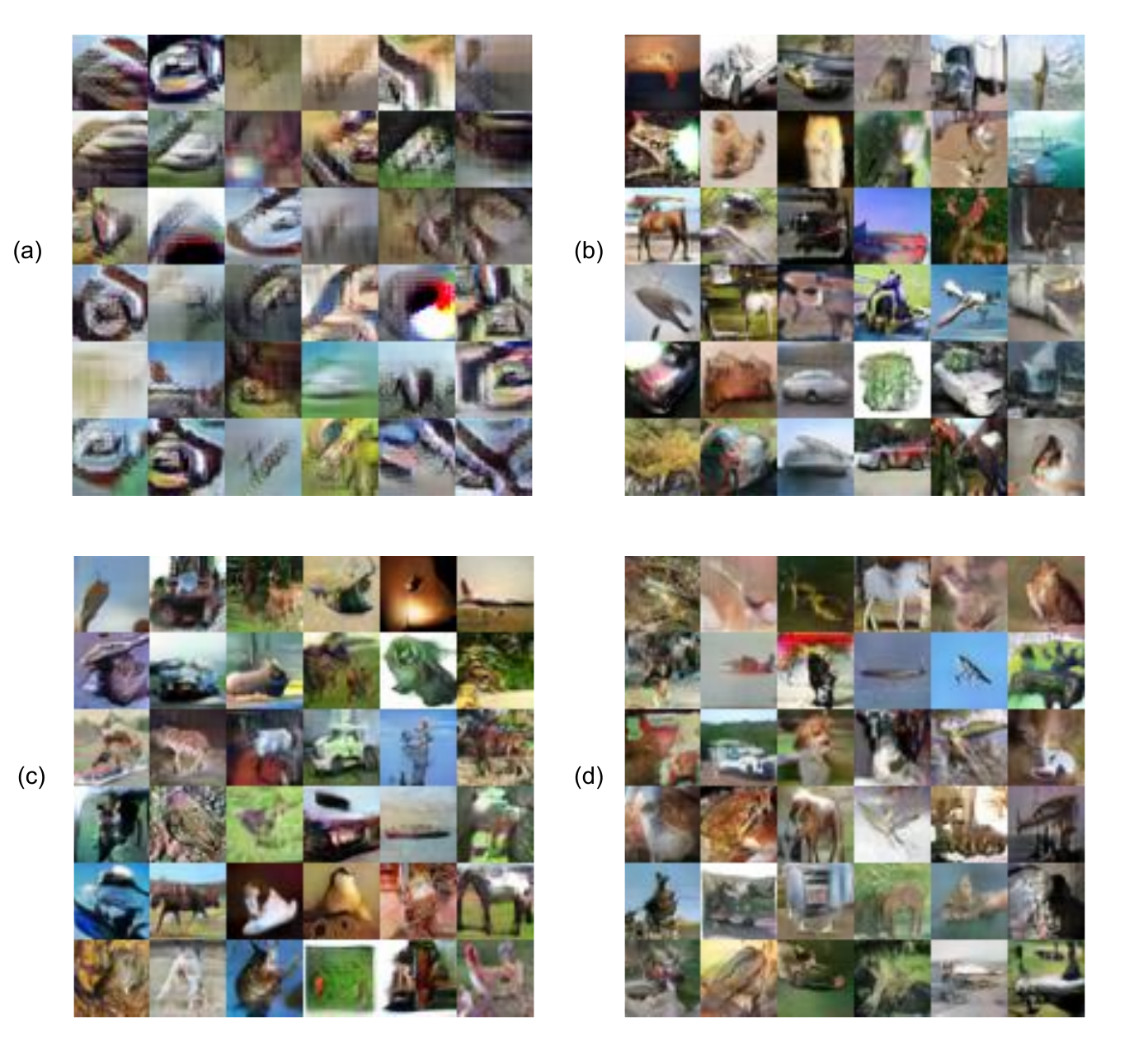}}
\caption{(a) WGAN, (b) KLD, (c) RKLD and (d) Hellinger distance: Samples of CIFAR-10 from generator and discriminator trained with residual network - strong discriminator (version 1).}
\end{center}
\end{figure*}

\begin{figure*}[t!]
\begin{center}
\centerline{\includegraphics[width=18cm,height=16cm]{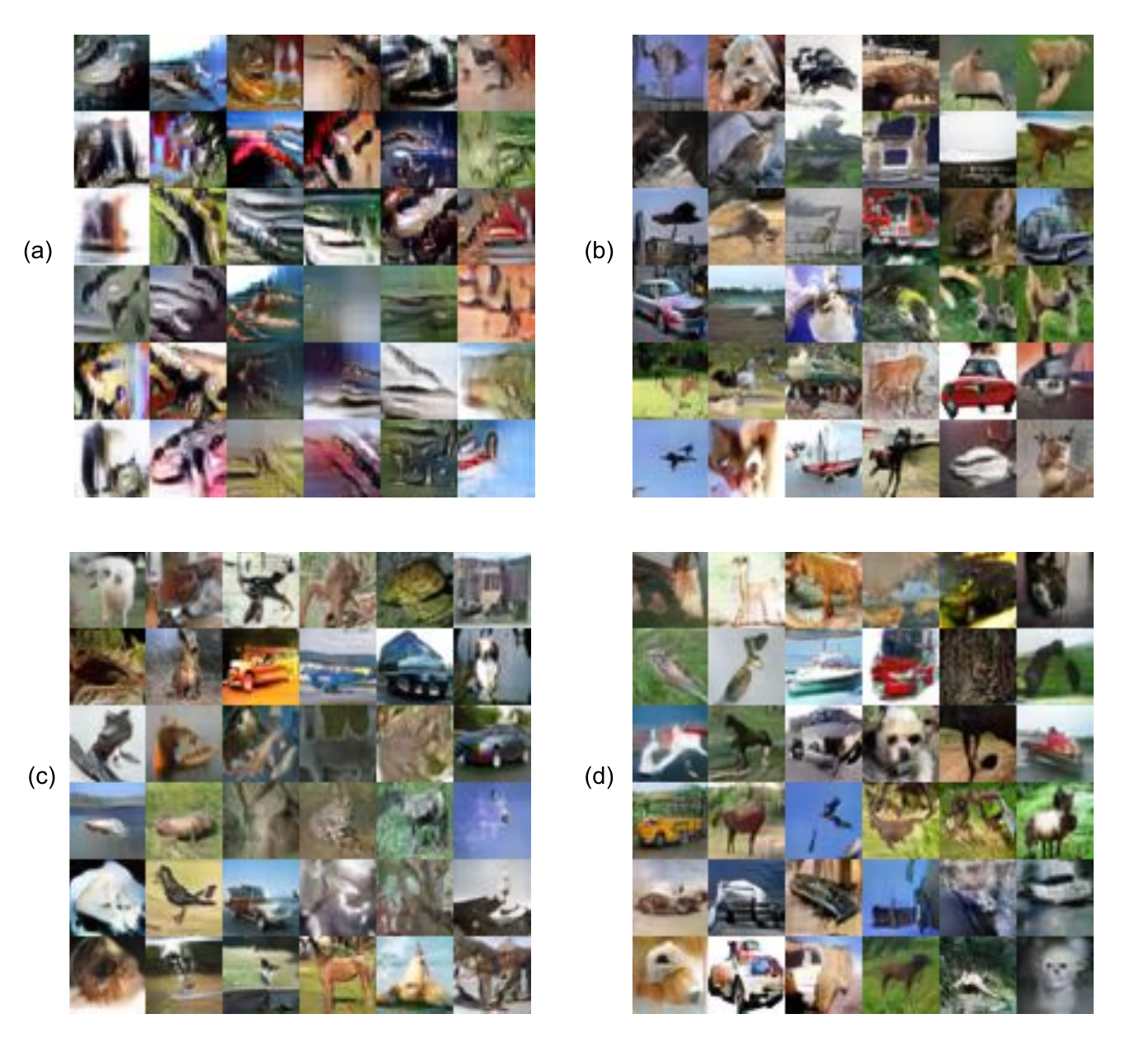}}
\caption{(a) WGAN, (b) KLD, (c) RKLD and (d) Hellinger distance: Samples of CIFAR-10 from generator and discriminator trained with residual network - strong discriminator (version 2).}
\end{center}
\end{figure*}

\begin{figure*}[t!]
\begin{center}
\centerline{\includegraphics[width=18cm,height=16cm]{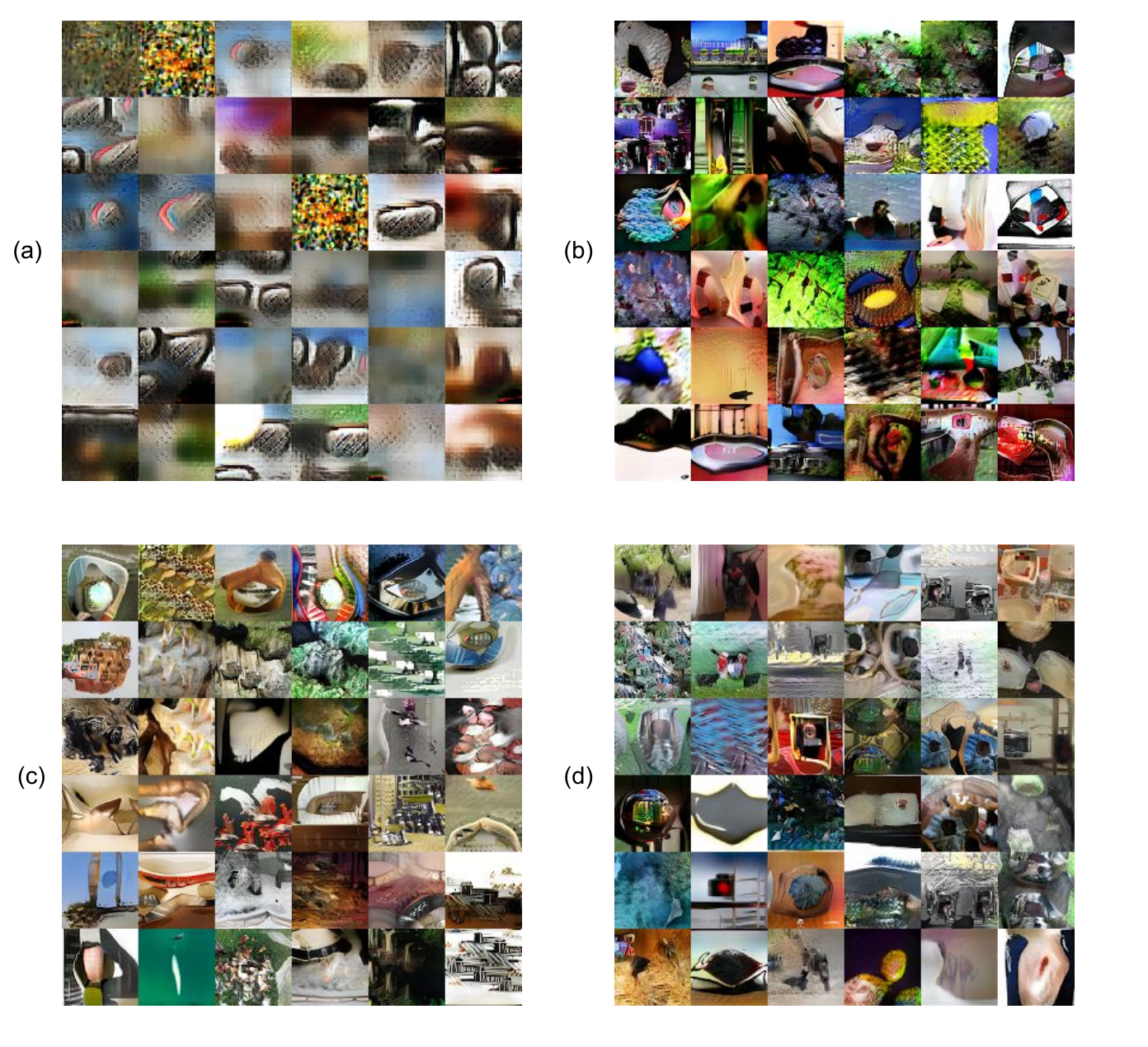}}
\caption{(a) WGAN, (b) KLD, (c) RKLD and (d) Hellinger distance: Samples of ImageNet from generator and discriminator trained with residual networks - weak discriminator.}
\end{center}
\end{figure*}

\begin{figure*}[t!]
\begin{center}
\centerline{\includegraphics[width=18cm,height=16cm]{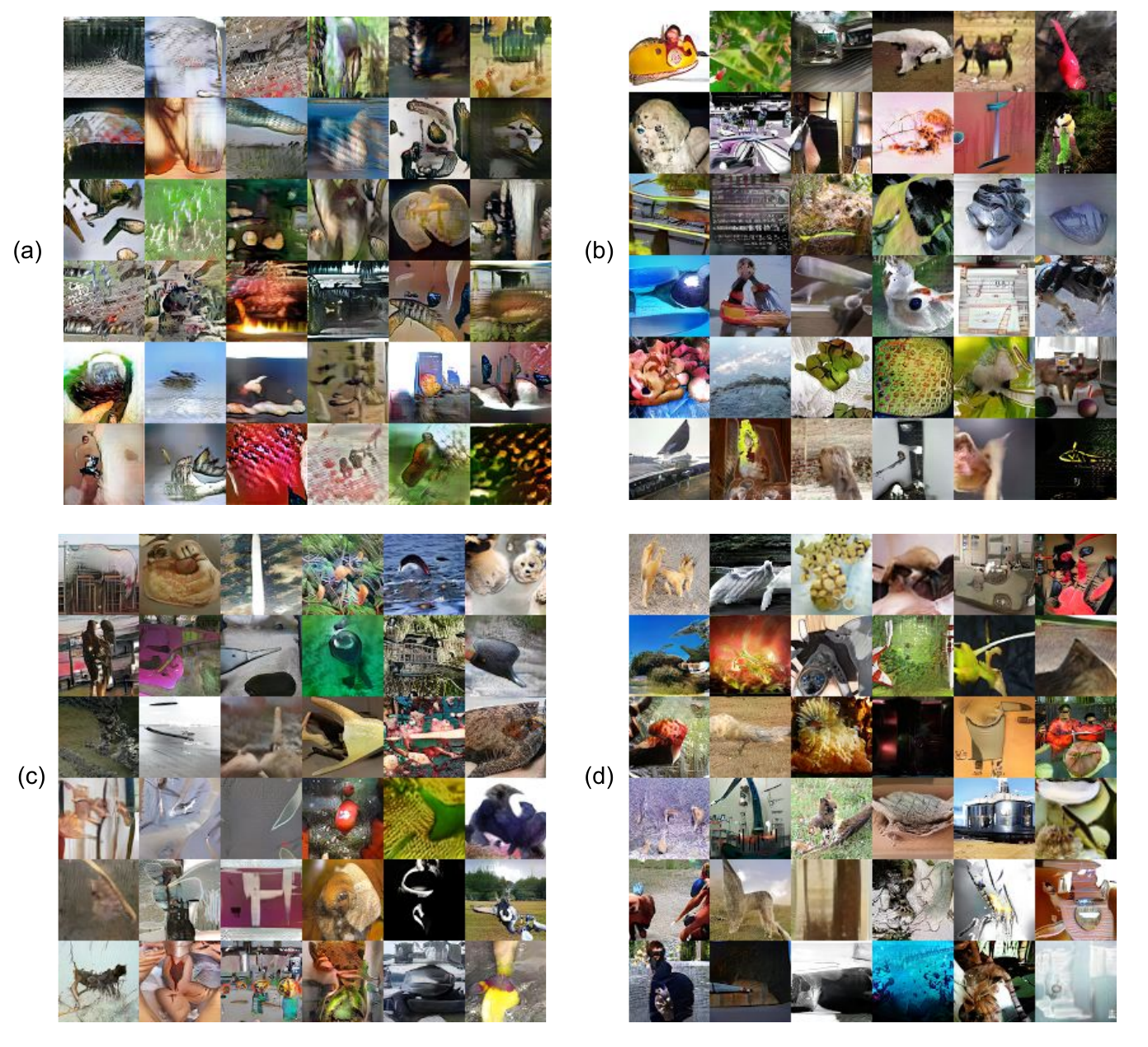}}
\caption{(a) WGAN, (b) KLD, (c) RKLD and (d) Hellinger distance: Samples of ImageNet from generator and discriminator trained with residual networks - strong discriminator.}
\end{center}
\end{figure*}

\end{document}